\documentclass[11pt]{article}

\usepackage[preprint]{acl}

\usepackage{times}
\usepackage{latexsym}

\usepackage[T1]{fontenc}

\usepackage[utf8]{inputenc}

\usepackage{microtype}

\usepackage{inconsolata}

\usepackage{graphicx}

\usepackage[most]{tcolorbox}

\usepackage{amsmath,amssymb,amsthm}
\usepackage{algorithm}
\usepackage{algpseudocode}
\usepackage{booktabs}
\usepackage{tabularx}

\newtheorem{proposition}{Proposition}
\newtheorem{corollary}{Corollary}
\usepackage{multirow}
\usepackage[table]{xcolor}
\usepackage{placeins}
\usepackage{CJKutf8}

\definecolor{promptbg}{RGB}{245,245,245}
\definecolor{oursbg}{RGB}{240,247,240}

\newcommand{\CCG}{\cellcolor{gray!15}}
\newcommand{\ours}{BPD}

\newtcolorbox{paperbox}{
  colback=gray!8,
  colframe=gray!60,
  boxrule=0pt,
  leftrule=1.8mm,
  arc=0pt,
  left=8pt,
  right=8pt,
  top=6pt,
  bottom=6pt,
}

\title{Balance-Aware Pivot Selection for Parallel Diffusion LLM Decoding}

\author{
 \textbf{Yejin Lee}\thanks{Equal contribution.},
 \textbf{Hyundong Jin}\footnotemark[1],
 \textbf{Yo-Sub Han}\thanks{Corresponding author.}
 \\
 Yonsei University, Seoul, Republic of Korea,
 \\
 \texttt{\{%
   \href{mailto:ssgyejin@yonsei.ac.kr}{ssgyejin},%
   \href{mailto:hsan@yonsei.ac.kr}{tuzi04},%
   \href{mailto:emmous@yonsei.ac.kr}{emmous}%
 \}@yonsei.ac.kr}
}

\begin{document}
\maketitle

\begin{abstract}
Diffusion language models enable parallel token generation,
offering a promising alternative 
to sequential autoregressive decoding.
Existing acceleration methods use predictive confidence
to enable parallel token commitment
while preserving generation quality.
However, confidence-based selection prioritizes
reliable token predictions without explicitly considering
how token positions shape the remaining decoding schedule.
Moreover, such ordering does not always
reflect inter-token dependencies,
suggesting that confidence need not be
the sole criterion for token selection.
Motivated by this limitation, we propose Balance-Aware Pivot Decoding~(\ours{}),
which favors central pivots within contiguous masked intervals
and decodes the selected positions in parallel.
Balanced pivots yield logarithmic decoding depth in the interval length.
Our practical method accounts for model uncertainty by combining
confidence-based prefilling with confidence-thresholded center selection.
We evaluate the resulting quality--efficiency tradeoff on LLaDA and Dream
across five mathematical reasoning, code generation, and instruction-following benchmarks. Our implementation is available at \url{https://anonymous.4open.science/r/DLM_random_pivot-EE62/}
\end{abstract}

\section{Introduction}
\label{bpd:main:sec:introduction}

Diffusion language models offer an alternative 
to autoregressive generation 
by iteratively recovering masked tokens 
using bidirectional context~\citep{nie2025llada,ye2025dream}.
Their ability to predict multiple positions 
in a single forward pass
can reduce the number of forward passes required for generation.
Realizing this potential requires an effective decoding strategy 
that determines how many tokens to commit
and which positions to select at each step~\citep{KimSKKC2025,ChenCL2026}.
Aggressive parallel commitment can compromise generation quality,
whereas conservative schedules leave much of 
the available parallelism unused~\citep{ZhongGZLDJSLZLZ2026}.

Recent methods address this trade-off through confidence-based selection,
caching, and iterative revision.
Fast-dLLM~\citep{WuZXLDZLHX26} combines confidence-aware parallel decoding
with KV-cache reuse, reducing both the number of decoding iterations
and the computation required per iteration.
WINO~\citep{HongYYHZZWY26} allows uncertain commitments to be revised
through a parallel drafting and verification process.
Hierarchy-dLLM~\citep{QiDZWJZ26} introduces a divide-and-conquer
structure that partitions unresolved positions into contiguous
sub-decoding areas and processes them in parallel.
These approaches demonstrate the effectiveness of
confidence-guided token selection and parallel decoding.

\begin{figure}[t]
    \centering
    \includegraphics[width=\columnwidth]{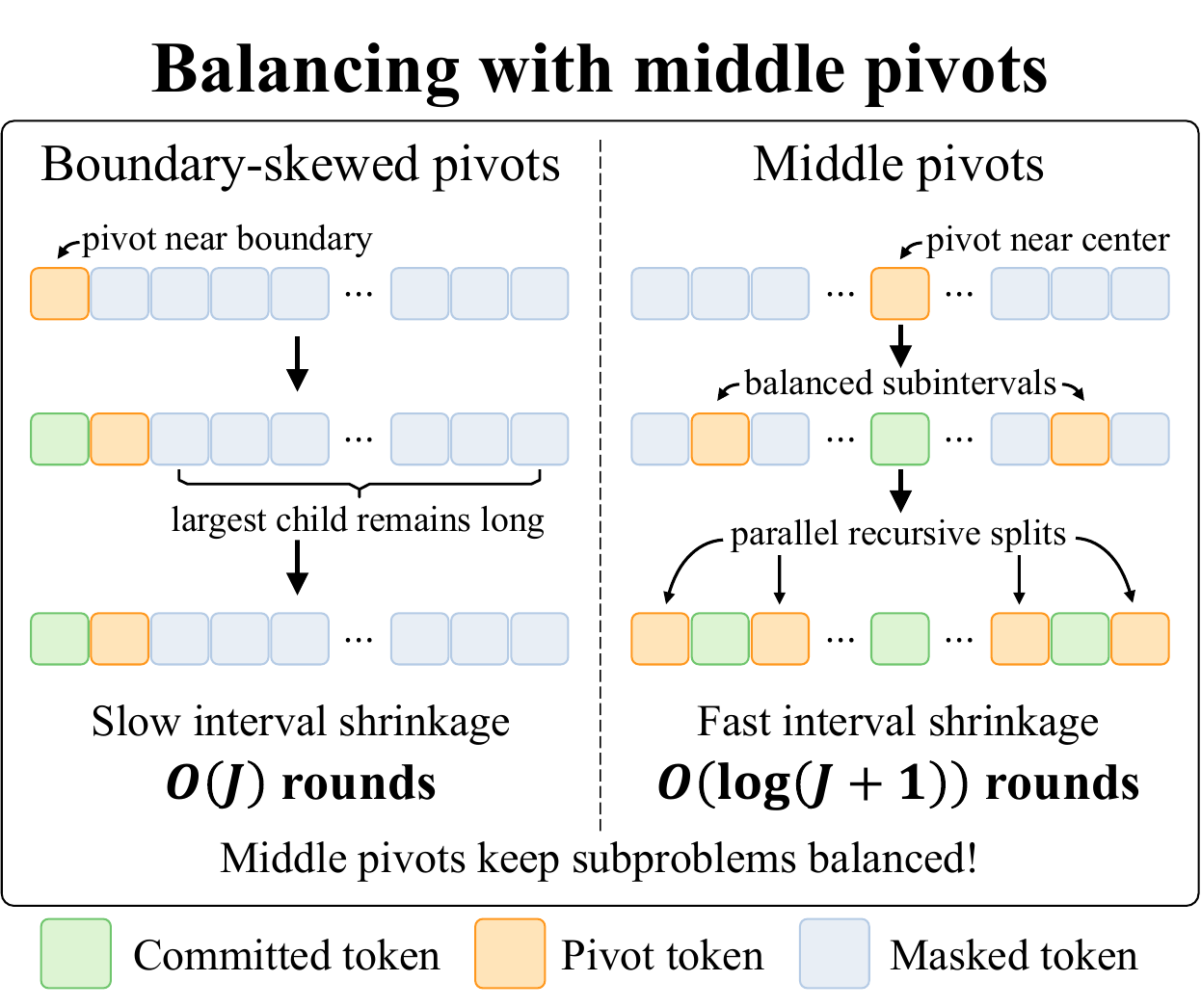}
    \caption{Motivation for position-aware unmasking strategy.
    Center-based selection promotes balanced partitions,
    unlike confidence-based selection that may leave
    long masked intervals.}
    \label{bpd:main:fig:motivation}
\end{figure}

However, confidence-based selection measures 
the reliability of individual token predictions, 
without explicitly accounting for the remaining decoding process. 
In threshold-based decoding, the number of committed tokens
depends on current predictions,
providing limited control over the remaining decoding rounds.
Moreover, recent studies suggest that confidence-based ordering
does not always align with token dependencies~\citep{KimN2026},
and that alternative unmasking orders can improve
decoding~\citep{LeeKPP2026}.
These observations motivate considering additional criteria
beyond predictive confidence when determining the decoding order. 
Rather than replacing confidence, 
we explore whether a simple positional strategy can provide 
a more structured decoding schedule 
while retaining the benefits of confidence-based token selection.

We propose \emph{Balance-Aware Pivot Decoding}~(\ours{}),
which introduces a structural preference into
confidence-aware token selection.
\ours{} favors pivots near the centers of contiguous masked intervals,
progressively dividing unresolved positions into smaller intervals
that can be processed in parallel,
as illustrated in Figure~\ref{bpd:main:fig:motivation}.
We show that balanced pivot selection
yields $O(\log(J+1))$ decoding rounds
for an interval of length $J$.
To account for prediction uncertainty,
\ours{} incorporates confidence-based prefilling
and thresholded pivot selection.
We empirically examine the resulting pivot balance
and quality--efficiency trade-off.

Our contributions are summarized as follows:

\begin{itemize}
    \item We introduce \ours{}, combining center-based pivot selection
    with confidence-aware decoding to exploit parallelism.

    \item We characterize the decoding depth of
    balanced pivot selection and analyze how
    confidence-aware pivoting affects the
    applicability of the theoretical bound.

    \item We evaluate generation quality, decoding efficiency,
    and pivot balance on LLaDA and Dream across five
    reasoning, code generation, and instruction-following benchmarks.
\end{itemize}



\section{Related Work}
\label{bpd:main:sec:related_work}
\paragraph{Diffusion Language Models}
Diffusion models, originally developed for continuous data
~\citep{HoJA20Diffusion,nichol2021improved},
have been extended to text generation through 
continuous~\citep{LiTGLH22Continu1,GongLF0K23Continu}
and discrete formulations~\citep{AustinJHTB21Discret1,HeSTWHQ23Discret}.
Masked diffusion language models~(DLMs) enable iterative
token refinement with flexible generation orders.
\citet{SahooASGMCRK24mdlm} introduced a simplified
training objective for masked diffusion,
while large-scale models such as
LLaDA~\citep{nie2025llada} and Dream~\citep{ye2025dream}
have demonstrated competitive performance
on reasoning and code generation tasks.
Further advances include preference
optimization~\citep{ZhuWNZWZLWL2026},
anchor-based conditioning~\citep{RoutCS25ADLM},
and iterative remasking~\citep{wang2025ReMDM}.

\paragraph{Parallel Decoding}
Parallel decoding exploits the ability of DLMs
to predict multiple masked tokens in a single forward pass,
reducing decoding iterations at the potential cost
of generation quality due to inter-token dependencies.
Fast-dLLM~\citep{WuZXLDZLHX26}
combines confidence-thresholded parallel decoding
with KV caching, allowing multiple high-confidence tokens
to be committed while reusing previously computed representations.
WINO~\citep{HongYYHZZWY26}
introduces a parallel drafting and verification process,
where uncertain commitments can be remasked
and refined in subsequent iterations.
Hierarchy-dLLM~\citep{QiDZWJZ26}
adopts a divide-and-conquer approach that recursively
partitions unresolved positions into sub-decoding areas,
using confidence-based selection within each area
to enable parallel progress.
It also supports remasking unreliable predictions.
These approaches improve decoding efficiency through
parallel commitment, computation reuse, or iterative revision,
while relying primarily on predictive confidence
to determine which tokens to commit.

\paragraph{Token Ordering and Structured Decoding}
Recent studies explore decoding criteria
beyond individual token confidence.
DAPD~\citep{KimJJ26DPAD}
constructs attention-based dependency graphs
to select weakly dependent tokens for parallel updates,
while DOS~\citep{ZhouHH26DOS}
uses attention-derived dependencies to guide decoding.
\citet{KimN2026} further show that confidence-guided
ordering can diverge from logical dependencies
in complex reasoning tasks.
In concurrent work, \citet{BansalCKSS2026}
investigate parallel sampling through graph random walks
and introduce recursive bisection samplers
with logarithmic parallel depth.
Their score-guided variant selects low-uncertainty pivots
within the middle half of each interval.
In contrast, \ours{} selects the most central position
among confidence-eligible candidates
and incorporates an initial confidence-based prefilling phase.
This design favors balanced splits
without imposing a fixed central-region constraint.

\section{Method}
\label{bpd:main:sec:method}

We introduce \emph{Balance-Aware Pivot Decoding}~(\ours{}),
a parallel decoding strategy that integrates positional balance with predictive confidence.
As illustrated in Figure~\ref{bpd:main:fig:overview},
\ours{} first commits high-confidence tokens in parallel
and then selects central pivots with in the remaining masked intervals to reduce decoding depth.
We first analyze the theoretical complexity of balanced pivot selection
and then describe our confidence-aware decoding procedure.

\subsection{Balance-Aware Pivot Decoding}
\label{bpd:main:ssec:bpd}

Given a prompt $u$, we initialize a generation canvas $y=\mathtt{MASK}^{G}$ 
and process blocks of size $B$ from left to right, 
assuming that $B$ divides $G$.
At iteration $t$ within a block, 
let $\mathcal M_t$ be its masked positions 
and $\mathcal I_t$ the maximal contiguous intervals in $\mathcal M_t$. 
One model forward pass produces distributions $p_i^{(t)}$ 
for the current block. 
Excluding $\mathtt{MASK}$ from the predictive vocabulary, 
we define
\begin{equation}
    \widehat y_i^{(t)}=\arg\max_v p_i^{(t)}(v),
    \quad c_i^{(t)}=p_i^{(t)}(\widehat y_i^{(t)}).
\end{equation}
The basic \ours{} rule selects
\begin{equation}
    j_t(I)=\arg\min_{i\in I}\left|i-\frac{a+b}{2}\right|,
    \quad I=[a,b]\in\mathcal I_t,
    \label{bpd:main:eq:midpoint}
\end{equation}
with positional ties resolved to the left. 
All selected predictions are committed simultaneously 
using the same forward output. 
A pivot $j$ replaces $[a,b]$ 
by its nonempty children $[a,j-1]$ and $[j+1,b]$. 
Committed tokens remain fixed.

The number of sequential rounds is determined 
by the deepest branch of this interval-splitting process. 
Since both children advance in the same subsequent rounds, 
the relevant recurrence is
\begin{equation}
    D(I)=1+\max\{D(I_{\mathrm L}),D(I_{\mathrm R})\},
    \quad D(\emptyset)=0,
    \label{bpd:main:eq:depth}
\end{equation}
for a realized splitting tree. 
Thus exact midpoint selection is not necessary.
Any pivot that contracts both children by a fixed factor 
gives logarithmic depth.

\begin{figure*}
    \centering
    \includegraphics[width=\linewidth]{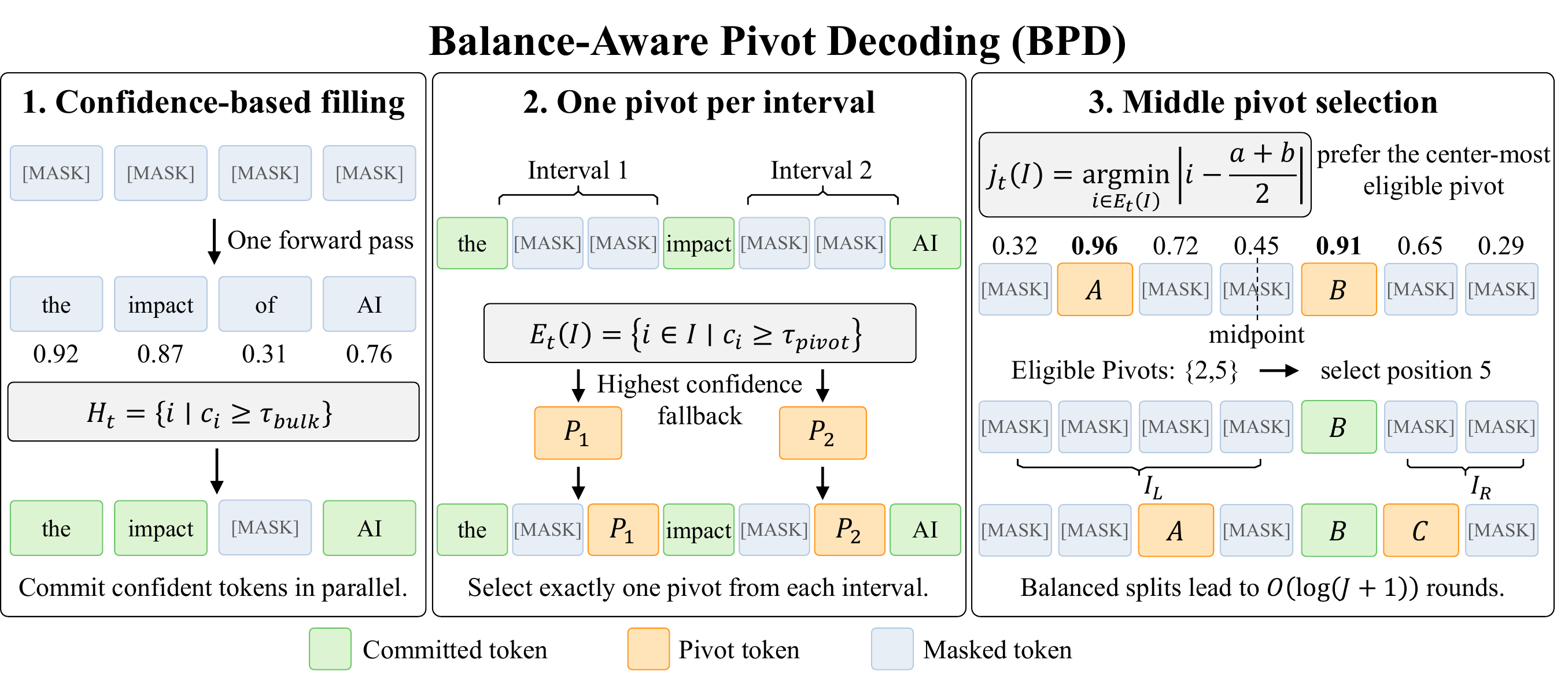}
    \caption{Overview of Balance-Aware Pivot Decoding~(\ours{}).
        \ours{} first commits high-confidence tokens in parallel,
        then selects one pivot per remaining masked interval.
        It favors the most central confidence-eligible position
        to promote balanced splits and reduce decoding depth.}
    \label{bpd:main:fig:overview}
\end{figure*}

\begin{proposition}[Balanced-pivot round complexity]
\label{bpd:main:prop:balanced}
Consider a masked interval of length $J\ge1$. Suppose every round commits one pivot in every nonempty interval and, for a fixed $\rho\in[1/2,1)$, every pivot satisfies
\begin{equation}
    \max\{|I_{\mathrm L}|,|I_{\mathrm R}|\}\leq\rho|I|.
    \label{bpd:main:eq:balance}
\end{equation}
Then decoding finishes within
\begin{equation}
    T(J)\leq\left\lfloor\frac{\log J}{\log(1/\rho)}\right\rfloor+1
    =O(\log(J+1))
    \label{bpd:main:eq:bound}
\end{equation}
rounds. In particular, pivots that leave neither child larger than $3|I|/4$ satisfy this bound with $\rho=3/4$.
\end{proposition}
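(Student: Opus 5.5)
The plan is a strong induction on $J$ using the depth recurrence \eqref{bpd:main:eq:depth}. Because every nonempty interval receives one pivot in each round and all children advance simultaneously, the number of rounds needed for an interval $I$ is exactly $D(I)$ along the realized splitting tree. It therefore suffices to show $D(I)\le f(|I|)$ for every interval, where
\begin{equation*}
f(J)=\left\lfloor \frac{\log J}{\log(1/\rho)}\right\rfloor+1 \quad (J\ge1), \qquad f(0)=0.
\end{equation*}
A convenient reformulation is that $f(J)$ is the smallest integer $k\ge1$ with $\rho^{k}J<\rho^{-0}\cdot 1$, i.e.\ $J<\rho^{-k}$. Equivalently, $f(J)=k$ iff $\rho^{-(k-1)}\le J<\rho^{-k}$.

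\textbf{Base case.} Take $J=1$. The pivot is the only position, so both children are empty and $D=1=f(1)$, since $\log 1=0$.

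\textbf{Inductive step.} Take $J\ge2$ and write $I_{\mathrm L},I_{\mathrm R}$ for the children. By \eqref{bpd:main:eq:balance}, each child has size at most $\rho J<J$, so the induction hypothesis applies to both. Since $f$ is nondecreasing, we get $\max\{D(I_{\mathrm L}),D(I_{\mathrm R})\}\le f(\lfloor\rho J\rfloor)$; the case of an empty child is covered because $f(0)=0\le$ anything. The step then reduces to the single inequality
\begin{equation*}
f(m)\le f(J)-1 \qquad \text{for all } 0\le m\le \rho J.
\end{equation*}
For $m=0$ this holds because $f(J)\ge1$. For $m\ge1$, let $k=f(J)$, so that $J<\rho^{-k}$. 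Then $m\le\rho J<\rho^{-(k-1)}$, which by the characterization gives $f(m)\le k-1$. Plugging this into \eqref{bpd:main:eq:depth} yields $D(I)\le 1+(k-1)=f(J)$.

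\textbf{Main obstacle.} The only delicate point is handling the floor correctly and avoiding an off-by-one error, which is why I plan to route the argument through the equivalence $f(J)=k \iff \rho^{-(k-1)}\le J<\rho^{-k}$. This equivalence follows directly from the definition of the floor: set $x=\log J/\log(1/\rho)\ge0$; then $\lfloor x\rfloor=k-1$ exactly when $k-1\le x<k$.

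\textbf{Remaining claims.} The asymptotic statement $f(J)=O(\log(J+1))$ is immediate because $\rho$ is a fixed constant with $\log(1/\rho)>0$. The final sentence of the proposition is the special case $\rho=3/4\in[1/2,1)$. I would also note that $\rho\ge1/2$ is needed for the hypothesis to be satisfiable, since the two children together contain $J-1$ positions, but the proof itself never uses this assumption.
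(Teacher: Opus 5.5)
Your argument is correct, but it is organized differently from the paper's. You run a strong induction on interval length through the recurrence \eqref{bpd:main:eq:depth}, proving $D(I)\le f(|I|)$ at every node of the realized splitting tree. You handle the floor via $f(J)=k\iff\rho^{-(k-1)}\le J<\rho^{-k}$.

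The paper argues round by round instead. It tracks the largest remaining interval length $L_t$ and notes $L_{t+1}\le\rho L_t$, so $L_t\le\rho^tJ$. With $t_\star=\lfloor\log J/\log(1/\rho)\rfloor+1$ one gets $\rho^{t_\star}J<1$, which forces the nonnegative integer $L_{t_\star}$ to be $0$. This is essentially your induction unrolled, and it buys several things:
\begin{itemize}
\item it is shorter and needs neither monotonicity of $f$ nor the floor characterization;
\item it uses integrality only once, at the very end;
\item it applies verbatim to any initial collection of intervals, with $J$ their maximum length;
\item it is the same contracting-potential template the paper reuses (with $\Phi_t=\sum_{I\in\mathcal I_t}|I|^2$) for the probabilistic extension.
\end{itemize}
Your version instead gives a local statement---every subtree individually obeys the bound---and makes the off-by-one bookkeeping fully explicit. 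In your proof, integrality enters through the base case $J=1$ and the well-foundedness of the induction.

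A minor point on your closing remark: $|I_{\mathrm L}|+|I_{\mathrm R}|=J-1$ by itself only forces $\rho\ge(J-1)/(2J)$. The condition $\rho\ge1/2$ becomes necessary only once an interval of even length must be split; for example, $J=3$ admits $\rho=1/3$. This does not affect your proof.
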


Exact midpoint selection completes an interval
in $\lceil\log_2(J+1)\rceil$ rounds.
Proposition~\ref{bpd:main:prop:balanced} extends this result
to any consistently balanced pivot rule,
providing a worst-case depth bound.
An expected-depth extension for probabilistically
balanced pivots is given in Appendix~\ref{bpd:app:sec:proofs}.
These bounds concern sequential decoding rounds,
not computational cost or generation quality,
and do not assume independence between masked intervals.

\subsection{Confidence-Based Prefilling}
\label{bpd:main:ssec:prefilling}

Pure midpoint selection ignores predictive confidence
and may commit unreliable tokens during early decoding step.
To address this limitation,
we introduce confidence-based prefilling,
which commit high-confidence predictions in parallel 
before pivot selection.
During the first $K$ iterations of each block,
we select tokens according to their predictive confidence.
At iteration $t<K$, define
\begin{equation}
    H_t=\{i\in\mathcal M_t:c_i^{(t)}\geq\tau_{\mathrm{bulk}}\}.
    \label{bpd:main:eq:bulk}
\end{equation}
If $H_t\neq\emptyset$, we commit all predictions 
in $H_t$ simultaneously. 
These commitments can divide the remaining masks 
into several shorter intervals. 
If $H_t=\emptyset$, we immediately apply 
the interval-wise pivot rule below. 
The probabilities and candidate sets are recomputed 
at every iteration.
This process provides reliable contextual information
for subsequent pivot selection.

\subsection{Confidence-Threshold Pivoting}
\label{bpd:main:ssec:threshold}

A geometrically central token may have low predictive confidence. 
We therefore retain the center preference 
while restricting it to candidates above a threshold. 
For each active interval $I=[a,b]$, let
\begin{equation}
    E_t(I)=\{i\in I:c_i^{(t)}\geq\tau_{\mathrm{pivot}}\}.
\end{equation}
We select
\begin{equation}
    j_t(I)=
    \begin{cases}
        \displaystyle\arg\min_{i\in E_t(I)}\left|i-\frac{a+b}{2}\right|,
            &E_t(I)\neq\emptyset,\\[4pt]
        \displaystyle\arg\max_{i\in I}c_i^{(t)},
            &E_t(I)=\emptyset.
    \end{cases}
    \label{bpd:main:eq:practical_pivot}
\end{equation}
Ties are resolved to the left. 
This rule commits exactly one token per interval 
and always makes progress. 
Let $P_t=\{j_t(I)\mid I\in\mathcal I_t\}$ denote
the selected pivots. Combining both techniques gives
\begin{equation}
    \begin{gathered}  
        S_t=
        \begin{cases}
            H_t,&t<K\text{ and }H_t\neq\emptyset,\\
            P_t,&\text{otherwise},
        \end{cases}\\[4pt]
        y_i\gets\widehat y_i^{(t)}\quad(i\in S_t).
    \end{gathered}
    \label{bpd:main:eq:combined}
\end{equation}

The confidence threshold allows \ours{} to avoid
uncertain central predictions.
To connect this rule with Proposition~\ref{bpd:main:prop:balanced},
let $A_\rho(I)$ denote the positions in interval $I$
that satisfy the balance condition in Eq.~\eqref{bpd:main:eq:balance}.
Since these positions form a central region,
the selected pivot is guaranteed to be balanced whenever
$E_t(I)\cap A_\rho(I)\neq\emptyset$.
Otherwise, the selected pivot may lie near an endpoint,
and the logarithmic bound is not guaranteed.
Without the balance condition,
the worst-case decoding depth is $B$ rounds per block.

\paragraph{Prefix KV caching.}
We employ approximate prefix KV caching
to reduce computation across decoding iterations.
At the first inference of each block,
we obtain its predictions and the prefix KV cache
from a single full-sequence forward pass.
Subsequent iterations reuse the fixed prefix cache
while recomputing the suffix.
This reduces computation per round
without affecting the interval-depth analysis.
For a fixed $K$, if all subsequent pivots
satisfy Eq.~\eqref{bpd:main:eq:balance},
each block requires $O(K+\log(B+1))$ rounds,
yielding $O((G/B)(K+\log(B+1)))$
forward passes over the full generation.

\section{Experiment}
\label{bpd:main:sec:experiment}

\begin{table*}[t!]
\centering
\small
\setlength{\tabcolsep}{4.5pt}
\begin{tabular}{llccccccccc}
\toprule
\multirow{2}{*}[-2pt]{Dataset}
& \multirow{2}{*}[-2pt]{Method}
& \multicolumn{3}{c}{LLaDA-8B-Instruct}
& \multicolumn{3}{c}{LLaDA-1.5}
& \multicolumn{3}{c}{Dream-7B} \\
\cmidrule(lr){3-5}
\cmidrule(lr){6-8}
\cmidrule(lr){9-11}
& & Score $\uparrow$
  & TPS $\uparrow$
  & QTG $\uparrow$
  & Score $\uparrow$
  & TPS $\uparrow$
  & QTG $\uparrow$
  & Score $\uparrow$
  & TPS $\uparrow$
  & QTG $\uparrow$ \\
\midrule

\multirow{7}{*}{GSM8K}
& Vanilla
& 81.65 & 12.02 & 1.00$\times$
& 81.88 & 11.92 & 1.00$\times$
& 82.87 & 14.62 & 1.00$\times$ \\
& Hierarchy-dLLM
& 77.94 & 52.87 & 4.20$\times$
& 79.45 & 54.51 & 4.44$\times$
& 73.54 & 73.43 & 4.46$\times$ \\
& WINO
& 80.29 & 46.24 & 3.78$\times$
& 79.53 & 46.46 & 3.79$\times$
& 80.21 & 55.24 & 3.66$\times$ \\
& Fast-dLLM
& 81.50 & 52.83 & \underline{4.39}$\times$
& 81.58 & 54.96 & \underline{4.59}$\times$
& 80.52 & 74.60 & \underline{4.96}$\times$ \\
& DAPD
& 80.44 & 36.01 & 2.95$\times$
& 82.03 & 37.69 & 3.17$\times$
& 82.56 & 41.84 & 2.85$\times$ \\
& DOS
& 79.98 & 22.32 & 1.82$\times$
& 82.18 & 23.39 & 1.97$\times$
& 82.49 & 32.75 & 2.23$\times$ \\
& \CCG{}Ours
& \CCG{}79.98 & \CCG{}54.50
& \CCG{} \textbf{4.44}$\times$
& \CCG{}80.59 & \CCG{}56.10
& \CCG{} \textbf{4.63}$\times$
& \CCG{}79.91 & \CCG{}82.87
& \CCG{} \textbf{5.47}$\times$ \\
\midrule

\multirow{7}{*}{MATH-500}
& Vanilla
& 38.60 & 12.13 & 1.00$\times$
& 37.40 & 12.07 & 1.00$\times$
& 42.00 & 14.70 & 1.00$\times$ \\
& Hierarchy-dLLM
& 36.20 & 40.94 & 3.17$\times$
& 36.80 & 41.24 & 3.36$\times$
& 26.40 & 48.22 & 2.06$\times$ \\
& WINO
& 37.20 & 28.08 & 2.23$\times$
& 38.20 & 27.97 & 2.37$\times$
& 40.20 & 34.97 & 2.28$\times$ \\
& Fast-dLLM
& 37.00 & 41.06 & \underline{3.24}$\times$
& 36.20 & 42.02 & \underline{3.37}$\times$
& 42.60 & 54.40 & \underline{3.75}$\times$ \\
& DAPD
& 37.80 & 26.63 & 2.15$\times$
& 36.60 & 22.78 & 1.85$\times$
& 39.40 & 28.93 & 1.85$\times$ \\
& DOS
& 34.60 & 17.27 & 1.28$\times$
& 35.80 & 18.05 & 1.43$\times$
& 39.60 & 21.19 & 1.36$\times$ \\
& \CCG{}Ours
& \CCG{}37.00 & \CCG{}44.20
& \CCG{}\textbf{3.49}$\times$
& \CCG{}37.00 & \CCG{}44.95
& \CCG{}\textbf{3.68}$\times$
& \CCG{}40.40 & \CCG{}62.52
& \CCG{}\textbf{4.09}$\times$ \\
\midrule

\multirow{7}{*}{HumanEval}
& Vanilla
& 35.98 & 10.44 & 1.00$\times$
& 39.63 & 10.34 & 1.00$\times$
& 59.76 & 11.79 & 1.00$\times$ \\
& Hierarchy-dLLM
& 32.93 & 72.53 & 6.36$\times$
& 37.20 & 67.81 & 6.16$\times$
& 32.32 & 83.43 & 3.83$\times$ \\
& WINO
& 39.63 & 43.49 & 4.59$\times$
& 38.41 & 39.75 & 3.73$\times$
& 57.32 & 51.30 & 4.17$\times$ \\
& Fast-dLLM
& 39.02 & 79.06 & \textbf{8.21$\times$}
& 38.41 & 78.54 & \textbf{7.36$\times$}
& 60.37 & 91.39 & \underline{7.83}$\times$ \\
& DAPD
& 37.80 & 47.33 & 4.76$\times$
& 37.80 & 46.23 & 4.26$\times$
& 57.32 & 39.38 & 3.20$\times$ \\
& DOS
& 34.76 & 26.93 & 2.49$\times$
& 37.80 & 27.76 & 2.56$\times$
& 57.32 & 34.08 & 2.77$\times$ \\
& \CCG{}Ours
& \CCG{}35.98 & \CCG{}78.29
& \CCG{}\underline{7.50}$\times$
& \CCG{}37.20 & \CCG{}77.83
& \CCG{}\underline{7.07}$\times$
& \CCG{}54.88 & \CCG{}107.61
& \CCG{}\textbf{8.38$\times$} \\
\midrule

\multirow{7}{*}{MBPP}
& Vanilla
& 39.60 & 10.78 & 1.00$\times$
& 40.00 & 10.74 & 1.00$\times$
& 58.00 & 12.69 & 1.00$\times$ \\
& Hierarchy-dLLM
& 36.80 & 49.38 & 4.26$\times$
& 36.20 & 49.30 & 4.15$\times$
& 49.20 & 124.89 & 8.35$\times$ \\
& WINO
& 38.80 & 26.50 & 2.41$\times$
& 38.60 & 26.71 & 2.40$\times$
& 56.40 & 56.34 & 4.32$\times$ \\
& Fast-dLLM
& 38.00 & 52.01 & \underline{4.63}$\times$
& 37.80 & 52.89 & \underline{4.65}$\times$
& 57.60 & 112.33 & \underline{8.79}$\times$ \\
& DAPD
& 38.80 & 30.74 & 2.79$\times$
& 38.00 & 32.03 & 2.83$\times$
& 56.20 & 69.43 & 5.30$\times$ \\
& DOS
& 40.20 & 20.58 & 1.94$\times$
& 39.80 & 21.28 & 1.97$\times$
& 57.80 & 54.23 & 4.26$\times$ \\
& \CCG{}Ours
& \CCG{}38.20 & \CCG{}55.90
& \CCG{}\textbf{5.00}$\times$
& \CCG{}38.60 & \CCG{}56.69
& \CCG{}\textbf{5.09}$\times$
& \CCG{}53.00 & \CCG{}131.36
& \CCG{}\textbf{9.46$\times$} \\
\midrule

\multirow{7}{*}{IFEval}
& Vanilla
& 56.01 & 12.26 & 1.00$\times$
& 58.60 & 12.98 & 1.00$\times$
& 51.39 & 16.24 & 1.00$\times$ \\
& Hierarchy-dLLM
& 46.40 & 35.88 & \underline{2.42}$\times$
& 50.28 & 35.97 & \underline{2.38}$\times$
& 31.61 & 58.51 & 2.22$\times$ \\
& WINO
& 55.82 & 21.23 & 1.73$\times$
& 57.12 & 21.80 & 1.64$\times$
& 50.83 & 31.70 & 1.93$\times$ \\
& Fast-dLLM
& 53.79 & 30.09 & 2.36$\times$
& 57.49 & 31.41 & 2.37$\times$
& 52.31 & 47.52 & 2.98$\times$ \\
& DAPD
& 58.23 & 25.91 & 2.20$\times$
& 58.41 & 25.59 & 1.97$\times$
& 40.67 & 67.35 & \underline{3.28}$\times$ \\
& DOS
& 59.89 & 19.77 & 1.72$\times$
& 60.44 & 19.52 & 1.55$\times$
& 56.93 & 25.17 & 1.72$\times$ \\
& \CCG{}Ours
& \CCG{}51.20 & \CCG{}43.66
& \CCG{}\textbf{3.26}$\times$
& \CCG{}54.71 & \CCG{}43.48
& \CCG{}\textbf{3.13}$\times$
& \CCG{}51.57 & \CCG{}63.94
& \CCG{}\textbf{3.95$\times$} \\
\bottomrule
\end{tabular}
\caption{
Main results on five benchmarks using LLaDA-8B-Instruct,
LLaDA-1.5, and Dream-7B.
We report generation quality (Score), throughput measured
in tokens per second (TPS), and Quality-Throughput Gain (QTG).
QTG is the product of relative throughput and relative
task performance with respect to vanilla.
}
\label{bpd:main:tab:main_results}
\end{table*}

\subsection{Experimental Setting}
\label{bpd:main:ssec:experimental_setting}
We evaluate our method on three diffusion language models: LLaDA-8B-Instruct, LLaDA-1.5, and Dream-7B.
We conduct all experiments on a single NVIDIA RTX A6000 GPU with 48 GB of memory.
We use semi-autoregressive decoding with a block size of 32 and a maximum generation length of 256 tokens.
We set both confidence threshold $\tau_{\mathrm{bulk}}$ and  $\tau_{\mathrm{pivot}}$ to 0.9, and the number of prefilling steps $K$ to 3.
We use the same decoding configuration for all benchmarks unless otherwise specified.
Further implementation details are provided in
Appendix~\ref{bpd:app:ssec:datasets_metrics_prompts}.

\paragraph{Benchmarks and Baselines.}
We evaluate our method on five benchmarks: GSM8K~\citep{cobbe2021training} and MATH-500~\citep{LightmanKBEBLLS24Math500} for mathematical reasoning, 
HumanEval~\citep{chen2021humaneval} and MBPP~\citep{austin2021mbpp} for code generation, 
and IFEval~\citep{zhou2023ifeval} for instruction following. 
We compare our method against five parallel decoding baselines: Hierarchy-dLLM~\citep{QiDZWJZ26}, WINO~\citep{HongYYHZZWY26}, Fast-dLLM~\citep{WuZXLDZLHX26}, DAPD~\citep{KimJJ26DPAD}, and DOS~\citep{ZhouHH26DOS}.

\paragraph{Evaluation Metrics.}
We evaluate task performance using Pass@1 for HumanEval and MBPP and accuracy for the remaining benchmarks. 
We measure inference efficiency using tokens per second~(TPS), 
which measures the number of generated tokens per second.
Since decoding acceleration can trade generation quality
for throughput, we use Quality-Throughput Gain~(QTG)
as our primary aggregate metric to evaluate both aspects jointly.
QTG is defined as the relative throughput weighted by relative
task performance with respect to vanilla decoding:

\begin{equation}
\mathrm{QTG}
=\frac{\mathrm{TPS}_{m}}{\mathrm{TPS}_{v}}
\cdot
\frac{\mathrm{Score}_{m}}{\mathrm{Score}_{v}},
\end{equation}

where $m$ denotes the evaluated decoding method and $v$ denotes vanilla decoding, and $\mathrm{Score}$ represents Pass@1 or accuracy.

\subsection{Main Results}
\label{bpd:main:ssec:main_results}

Table~\ref{bpd:main:tab:main_results} compares
\ours{} with five accelerated decoding baselines
across three models and five benchmarks.
\ours{} achieves the highest QTG in 13 of the 15 settings 
and the highest throughput in 12 settings. 
We examine these results by decoding
method, task, and model to distinguish efficiency gains
from changes in generation quality.

\paragraph{Comparison across Decoding Methods.}
Hierarchy-dLLM provides a particularly relevant
comparison because it also partitions unresolved
positions into smaller regions, while relying
primarily on confidence for token selection.
Compared with Hierarchy-dLLM, \ours{} achieves
higher throughput in all 15 settings, with
equal or higher task scores throughout.
Fast-dLLM is a stronger competitor, particularly
on LLaDA code generation.
On HumanEval, Fast-dLLM achieves QTG values of
8.21$\times$ and 7.36$\times$ on the two LLaDA
models, compared with 7.50$\times$ and
7.07$\times$ for \ours{}.
Dependency-aware methods, including DAPD and DOS,
retain higher task scores in some settings but
generally operate at lower throughput.

\paragraph{Comparison across Datasets.}
On GSM8K and MATH-500, \ours{} remains within
three percentage points of vanilla accuracy
across all six model-benchmark settings,
while achieving approximately 3.6--5.7$\times$
higher throughput.
Code generation exhibits a more model-dependent
quality-efficiency trade-off.
On the two LLaDA models, HumanEval Pass@1
is preserved or decreases by at most
2.43 percentage points relative to vanilla.
On Dream, however, HumanEval Pass@1 decreases
from 59.76 to 54.88, and MBPP Pass@1 decreases
from 58.00 to 53.00, while throughput reaches
107.61 and 131.36 TPS, respectively.
IFEval presents a different challenge.
Although \ours{} achieves the highest QTG
for all three models, its instruction-following
scores decrease by 4.81 and 3.89 percentage
points on the two LLaDA models.
These results highlight that decoding acceleration
does not uniformly preserve generation quality,
and that throughput gains must be interpreted
alongside task-specific performance.

\paragraph{Comparison across Models.}
We use identical block sizes, prefilling schedules,
and confidence thresholds across all three models,
without model-specific tuning.
Nevertheless, the relative advantages of
different decoding strategies vary across
model families.
For instance, on Dream, \ours{} achieves
17.7\% and 16.9\% higher throughput than
Fast-dLLM on HumanEval and MBPP, respectively.
These improvements are accompanied by
Pass@1 reductions of 5.49 and 4.60 percentage
points relative to Fast-dLLM.
These results indicate that the proposed
decoding strategy transfers across different
model families, although its relative
quality-throughput advantage depends on
the model and task.

\begin{table}[t]
\centering
\small
\setlength{\tabcolsep}{3.0pt}
\begin{tabular}{lccccc}
\toprule
Method & GSM8K & MATH & H.Eval & MBPP & IFEval \\
\midrule

\multirow{2}{*}{Vanilla}
& 11.41 & 12.69 & 10.99 & 11.11 & 12.28 \\
& (81.65) & (38.60) & (35.98) & (39.60) & (56.01) \\
\midrule

\multirow{2}{*}{Hierarchy}
& 51.36 & 48.74 & 85.36 & 53.96 & 37.69 \\
& (77.94) & (36.20) & (32.93) & (36.80) & (46.40) \\
\midrule

\multirow{2}{*}{WINO}
& 48.67 & 33.84 & 50.23 & 29.55 & 21.49 \\
& (80.29) & (37.20) & (39.63) & (38.80) & (55.82) \\
\midrule

\multirow{2}{*}{Fast-dLLM}
& 54.90 & 47.69 & 86.40 & 56.50 & 31.94 \\
& (81.50) & (37.00) & (39.02) & (38.00) & (53.79) \\
\midrule

\multirow{2}{*}{DAPD}
& 37.77 & 33.50 & 56.87 & 35.20 & 28.17 \\
& (80.44) & (37.80) & (37.80) & (38.80) & (58.23) \\
\midrule

\multirow{2}{*}{DOS}
& 23.31 & 20.24 & 33.99 & 23.68 & 21.17 \\
& (79.98) & (34.60) & (34.76) & (40.20) & (59.89) \\
\midrule

\rowcolor{gray!15}
& \textbf{56.12} & \textbf{50.17} & \textbf{86.97}
& \textbf{61.32} & \textbf{46.89} \\
\rowcolor{gray!15}
\multirow{-2}{*}{\textbf{Ours}}
& (79.98) & (37.00) & (35.98) & (38.20) & (51.20) \\

\bottomrule
\end{tabular}
\caption{
Generation quality and throughput on five benchmarks
using LLaDA-8B-Instruct.
Each method reports TPS in the first row and
task performance (Score) in the second row.
}
\label{bpd:main:tab:llada_correct_throughput}
\end{table}

\subsection{Throughput on Correct Generations}
\label{bpd:main:ssec:correct_throughput}

Overall throughput may be influenced by failed generations 
across decoding methods.
We therefore additionally measure TPS on correctly
solved instances, while reporting task scores over
the complete evaluation set.
Table~\ref{bpd:main:tab:llada_correct_throughput}
shows that \ours{} achieves the highest correct-generation
throughput on all five benchmarks with LLaDA-8B-Instruct.
Compared with the next-fastest method,
\ours{} improves throughput by 8.5\% on MBPP
and 24.4\% on IFEval.
The gains are more modest on GSM8K, MATH-500,
and HumanEval, ranging from 0.7\% to 2.9\%.
These findings show that the efficiency advantage
persists on successful generations, rather than
appearing only in the aggregate results.
Because the correct-generation subsets can differ
across methods, this analysis complements the
full-set quality and throughput comparisons
rather than replacing them.

\section{Analysis}
\label{bpd:main:sec:analysis}

\subsection{Ablation Study}
\label{bpd:main:ssec:ablation_study}

We evaluate the contributions of positional pivot selection,
confidence-based prefilling, and KV caching
on HumanEval using LLaDA-8B-Instruct,
as reported in Table~\ref{bpd:main:tab:ablation_study}.
To isolate the effect of positional preference,
we replace our center-preferred pivot rule with
random and confidence-based selection variants,
while retaining the other components.
These variants increase forward steps from
55.73 to 64.20 and 71.79, respectively,
while reducing throughput from 78.29 to
68.72 and 62.34 TPS.
Both variants also achieve lower Pass@1,
supporting the benefit of positional preference.
Unlike the bisection strategies in
Section~\ref{bpd:main:ssec:pivot_balance},
these variants do not enforce a fixed
balance constraint on pivot positions.

Confidence-based prefilling contributes to both
generation quality and decoding efficiency.
Removing prefilling reduces Pass@1 from 35.98 to 33.54
and increases forward steps from 55.73 to 73.77.
Early high-confidence commitments reduce the number
of unresolved tokens and can partition long masked
intervals before subsequent pivot selection.
Prefix KV caching further improves inference efficiency.
Without caching, throughput decreases from 78.29
to 54.01 TPS, despite requiring fewer forward steps.
The noncached variant also achieves a slightly
higher Pass@1 of 37.20, suggesting that
approximate cache reuse may affect predictions.
Overall, the complete method achieves the highest
QTG of 7.50$\times$ among the evaluated variants.

\begin{table}[t]
    \centering
    \small
    \setlength{\tabcolsep}{5pt}
    \begin{tabular}{lrrrr}
        \toprule
        Variant & Pass@1 & TPS & \# Steps & QTG \\
        \midrule
        Vanilla & 35.98 & 10.44 & 256.00 & 1.00$\times$ \\
        w/o KV caching & 37.20 & 54.01 & 49.21 & 5.35$\times$\\
        w/o Prefilling & 33.54 & 59.15 & 73.77 & 5.28$\times$\\
        w/ Random Pivot & 34.15 & 68.72 & 64.20 & 6.25$\times$\\
        w/ Confident Pivot & 35.37 & 62.34 & 71.79 & 5.87$\times$\\
        \midrule
        Ours & 35.98 & 78.29 & 55.73 & 7.50$\times$\\
        \bottomrule
    \end{tabular}
    \caption{
        Ablation study on HumanEval using LLaDA-8B-Instruct to examine the contribution of each component. We report generation quality, throughput, and quality-throughput gain~(QTG).
    }
    \label{bpd:main:tab:ablation_study}
\end{table}

\subsection{Pivot Selection, Prefilling, and Balance}
\label{bpd:main:ssec:pivot_balance}

We further examine how pivot selection interacts
with confidence-based prefilling and the balance
condition underlying
Proposition~\ref{bpd:main:prop:balanced}.
Table~\ref{bpd:main:tab:pivot_balance_analysis}
compares \ours{} with three bisection strategies
from \citet{BansalCKSS2026}: Exact, Margin, and Entropy.
We additionally include a random bisection variant
that samples pivots from the balanced region.
Each strategy is evaluated with and without
confidence-based prefilling, using identical
generation settings and prefix KV caching.
Implementation details are provided in
Appendix~\ref{bpd:app:ssec:baselines}.

\paragraph{Pivot Selection and Prefilling.}
Without prefilling, \ours{} achieves the highest
Pass@1 of 33.54\% and QTG of 5.28$\times$
among the five strategies, despite having the
lowest throughput.
Exact bisection achieves higher throughput, but its Pass@1
drops to 23.17\%.
This contrast illustrates that enforcing balanced
partitions can accelerate decoding at the cost
of generation quality.
Adding prefilling improves throughput across
all five strategies, by 41.6--50.0\% for the
bisection variants and 32.4\% for \ours{}.
However, its effect on Pass@1 depends on the
pivot selection strategy.
Prefilling improves Pass@1 for Margin, Random,
and \ours{}, but reduces it for Exact and Entropy.
For example, Margin improves from 25.00\% to
29.27\%, whereas Entropy decreases from 29.88\%
to 27.44\%.
This suggests that early high-confidence commitments
interact with subsequent pivot selection rather than
uniformly improving generation quality.
With prefilling, Margin achieves the highest
QTG of 7.59$\times$, slightly exceeding
the 7.50$\times$ of \ours{}.
However, \ours{} retains the highest Pass@1
of 35.98\%, exceeding Margin by 6.71 percentage
points, while operating at lower throughput.
These results highlight the trade-off between
enforcing balanced partitions and incorporating
prediction confidence into pivot selection.

\begin{table}[t]
    \centering
    \small
    \setlength{\tabcolsep}{3pt}
    \begin{tabular}{lcccccc}
        \toprule
        \multirow{2}{*}[-2pt]{Method} & \multicolumn{3}{c}{Pivoting} & \multicolumn{3}{c}{Prefilling} \\
        \cmidrule(lr){2-4}
        \cmidrule(lr){5-7}
         & Pass@1 & TPS & QTG & Pass@1 & TPS & QTG \\
        \midrule
        Exact & 23.17 & 76.65 & 4.73 & 22.56 & 108.57 & 6.52 \\
        Margin & 25.00 & 64.96 & 4.32 & 29.27 & 97.41 & \textbf{7.59} \\
        Entropy & 29.88 & 63.53 & \underline{5.05} & 27.44 & 94.54 & 6.91 \\
        Random & 20.12 & 69.25 & 3.71 & 21.34 & 102.17 & 5.80 \\
        \midrule
        Ours & 33.54 & 59.15 & \textbf{5.28} & 35.98 & 78.29 & \underline{7.50} \\
        \bottomrule
    \end{tabular}
    \caption{
    Comparison of pivot selection strategies
    with and without confidence-based prefilling
    on LLaDA-8B-Instruct and HumanEval.
    }
    \label{bpd:main:tab:pivot_balance_analysis}
\end{table}

\begin{table}[t]
    \centering
    \small
    \setlength{\tabcolsep}{9pt}
    \begin{tabular}{lrrr}
        \toprule
        Pivot Type & \# Events & \# Balanced & BPR (\%) \\
        \midrule
        Eligible & 2,570 & 1,451 & 56.46 \\
        Fallback & 2,113 & 423 & 20.02 \\
        \midrule
        Total & 4,683 & 1,874 & 40.02 \\
        \bottomrule
    \end{tabular}
    \caption{
        Balance statistics of \ours{} pivot selections.
        The first three iterations of each block
        and intervals of length at most four are excluded.
    }
    \label{bpd:main:tab:pivot_type_breakdown}
\end{table}

\begin{figure*}[!t]
    \centering
    \includegraphics[width=0.9\linewidth]{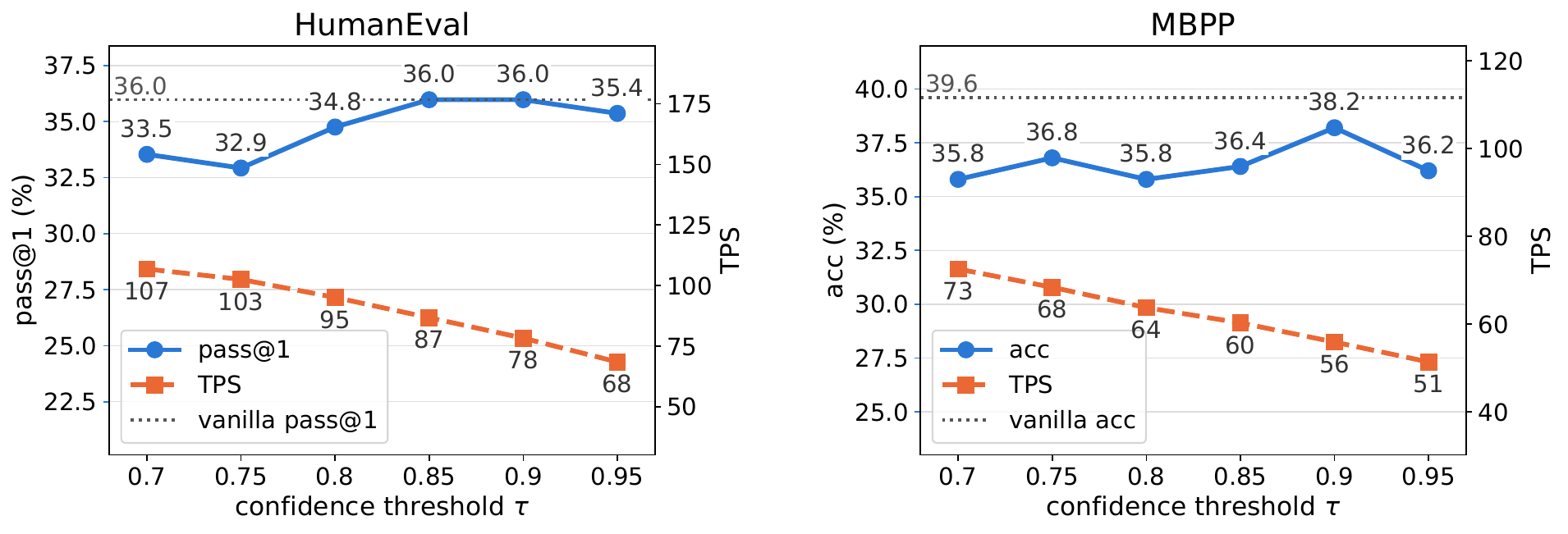}
    \caption{Effect of the confidence threshold on generation quality and throughput using LLaDA-8B-Instruct. We report Pass@1 on HumanEval and Accuracy on MBPP, along with tokens per second~(TPS). Dotted lines indicate vanilla decoding performance.}
    \label{bpd:main:fig:threshold}
\end{figure*}

\paragraph{Theoretical and Empirical Balance.}
The bisection variants enforce the $3/4$-balance
condition at every pivot selection.
Consequently, their pivoting rounds satisfy
the logarithmic depth bound in
Proposition~\ref{bpd:main:prop:balanced}.
With prefilling, the same bound applies
to the remaining intervals after the initial
$K$ iterations, yielding
$O(K+\log(R+1))$ total rounds,
where $R$ is the largest remaining interval.
In contrast, \ours{} prioritizes central positions
only among confidence-eligible candidates
and may select unbalanced fallback pivots.
Its logarithmic depth is therefore not guaranteed.

To characterize this distinction,
we measure the balanced pivot rate (BPR)
and critical-path BPR (CP-BPR), as defined in
Appendix~\ref{bpd:app:ssec:datasets_metrics_prompts}.
As shown in Table~\ref{bpd:main:tab:pivot_type_breakdown},
\ours{} achieves an overall BPR of 40.02\%.
Confidence-eligible pivots satisfy the balance
condition more frequently than fallback pivots.
The CP-BPR is 31.90\%, 8.12 percentage points
below the overall BPR.
This indicates that unbalanced selections are
more frequent along the critical paths,
which determine the number of decoding rounds
rather than the average balance across all pivots.

The probabilistic bound in
Appendix~\ref{bpd:app:sec:proofs}
requires a lower bound on the probability
of balanced selection conditional on
the decoding history, which is not implied
by the observed aggregate BPR.
Instead, the theoretical analysis establishes
the structural advantage of balanced partitioning,
while our experiments characterize the trade-off
introduced by relaxing this constraint.
Strict bisection guarantees interval contraction
but may commit uncertain predictions,
whereas \ours{} permits unbalanced splits
when confidence-eligible central pivots
are unavailable.
This flexibility allows \ours{} to achieve
higher generation quality than strictly balanced
strategies, while retaining a substantial
throughput improvement over vanilla decoding.
These results support positional balance
as a useful decoding preference rather than
an unconditional constraint.

\subsection{Effect of the Confidence Threshold}
\label{bpd:main:ssec:hyperparameter}
We investigate how the confidence threshold affects generation quality and decoding efficiency.
Figure~\ref{bpd:main:fig:threshold} shows that increasing the threshold consistently reduces throughput, while improvements in task performance remain limited.
HumanEval achieves its highest Pass@1 at thresholds of 0.85 and 0.90, whereas MBPP reaches its highest accuracy at 0.90.
Further increasing the threshold reduces performance on both benchmarks despite confidence filtering.

This behavior indicates the interaction between confidence filtering and pivot selection.
\ours{} selects the position closest to the interval center among candidates that satisfy the confidence threshold.
As the threshold increases, fewer positions remain eligible,
reducing the chance of finding a reliable pivot near the center.
This restriction can shift pivot selection away from interval centers and produce uneven partitions that require additional decoding steps.
The results suggest that excessively strict confidence filtering restricts the structural advantage of \ours{} without providing further gains in generation quality.
These findings demonstrate that the confidence threshold determines how effectively \ours{} can exploit balanced interval partitioning while maintaining prediction reliability.

\section{Conclusion}
\label{bpd:main:sec:conclusion}

We presented Balance-Aware Pivot Decoding~(\ours{}),
a training-free parallel decoding strategy for diffusion
language models that incorporates positional balance
into confidence-aware token selection.
\ours{} combines confidence-based prefilling with
center-preferred pivot selection to promote balanced
interval decomposition without forcing uncertain commitments.
We established logarithmic decoding-depth bounds
under balanced pivot selection and examined their
applicability to practical decoding.
Experiments across three diffusion language models
and five benchmarks demonstrate favorable
quality-throughput trade-offs, achieving up to
10.35$\times$ higher throughput than vanilla decoding
with a QTG of 9.46$\times$.
\ours{} achieves the highest QTG in 13 of 15 settings
and the highest throughput in 12 settings.
These findings suggest that positional balance
can serve as a useful structural preference
rather than a strict constraint for accelerating
parallel diffusion decoding.

\section*{Limitations}

Our theoretical decoding-depth bound assumes that
every selected pivot satisfies a fixed balance condition.
In practice, \ours{} incorporates confidence-based selection
and fallback mechanisms that do not guarantee this condition.
Thus, the logarithmic bound does not unconditionally
apply to the practical algorithm.
Moreover, the theoretical analysis concerns sequential
decoding rounds rather than wall-clock latency,
which also depends on computational and caching overhead.
Additionally, \ours{} does not revise committed tokens,
which may limit its robustness when uncertain
predictions are committed prematurely.



\bibliography{custom}

@inproceedings{SahooASGMCRK24mdlm,
 title = {Simple and Effective Masked Diffusion Language Models},
 author = {Sahoo, Subham Sekhar and Arriola, Marianne and Schiff, Yair and Gokaslan, Aaron and Marroquin, Edgar and Chiu, Justin T and Rush, Alexander and Kuleshov, Volodymyr},
 booktitle = {Advances in Neural Information Processing Systems, {NeurIPS}},
 pages = {130136--130184},
 year = {2024}
}

@inproceedings{nie2025llada,
  title        = {Large Language Diffusion Models},
  author       = {Shen Nie and
                  Fengqi Zhu and
                  Zebin You and
                  Xiaolu Zhang and
                  Jingyang Ou and
                  Jun Hu and
                  Jun Zhou and
                  Yankai Lin and
                  Ji{-}Rong Wen and
                  Chongxuan Li},
  booktitle    = {Advances in Neural Information Processing Systems 38: Annual Conference on Neural Information Processing Systems, {NeurIPS}},
  year         = {2025},
}

@inproceedings{ZhuWNZWZLWL2026,
  author       = {Fengqi Zhu and
                  Rongzhen Wang and
                  Shen Nie and
                  Xiaolu Zhang and
                  Chunwei Wu and
                  Jun Zhou and
                  Yankai Lin and
                  Ji{-}Rong Wen and
                  Chongxuan Li},
  title        = {LLaDA 1.5: Variance-Reduced Preference Optimization for Large Language Diffusion Models},
  booktitle    = {Proceedings of the 64th Annual Meeting of the Association for Computational Linguistics (Volume 1: Long Papers), {ACL}},
  pages        = {11425--11460},
  year         = {2026},
}

@misc{ye2025dream,
  title={Dream 7b: Diffusion Large Language Models},
  author={Ye, Jiacheng and Xie, Zhihui and Zheng, Lin and Gao, Jiahui and Wu, Zirui and Jiang, Xin and Li, Zhenguo and Kong, Lingpeng},
  howpublished={arXiv preprint arXiv:2508.15487},
  year={2025}
}

@inproceedings{wang2025ReMDM,
  title={Remasking discrete diffusion models with inference-time scaling},
  author={Wang, Guanghan and Schiff, Yair and Sahoo, Subham Sekhar and Kuleshov, Volodymyr},
  booktitle = {Advances in Neural Information Processing Systems, {NeurIPS}},
  year={2025}
}

@inproceedings{RoutCS25ADLM,
  title        = {Anchored Diffusion Language Model},
  author       = {Litu Rout and
                  Constantine Caramanis and
                  Sanjay Shakkottai},
  booktitle    = {Advances in Neural Information Processing Systems 38: Annual Conference on Neural Information Processing Systems, {NeurIPS}},
  year         =  {2025}
}

@article{KimJJ26DPAD,
  author       = {Bumjun Kim and
                  Dongjae Jeon and
                  Moongyu Jeon and
                  Albert No},
  title        = {Dependency-Aware Parallel Decoding via Attention for Diffusion LLMs},
  journal      = {ICML},
  year         = {2026},
}

@inproceedings{ZhouHH26DOS,
  author       = {Xueyu Zhou and
                  Yangrong Hu and
                  Jian Huang},
  title        = {{DOS:} Dependency-Oriented Sampler for Masked Diffusion Language Models},
  booktitle    = {Findings of the Association for Computational Linguistics, {ACL}},
  year = {2026}
}

@inproceedings{HoJA20Diffusion,
  author       = {Jonathan Ho and
                  Ajay Jain and
                  Pieter Abbeel},
  title        = {Denoising Diffusion Probabilistic Models},
  booktitle    = {Advances in Neural Information Processing Systems 33: Annual Conference on Neural Information Processing Systems 2020, {NeurIPS}},
  year = {2020}
}

@inproceedings{nichol2021improved,
  author={Nichol, Alexander Quinn and Dhariwal, Prafulla},
  title={Improved denoising diffusion probabilistic models},
  booktitle={International conference on machine learning},
  pages={8162--8171},
  year={2021},
}

@inproceedings{LiTGLH22Continu1,
  author       = {Xiang Lisa Li and
                  John Thickstun and
                  Ishaan Gulrajani and
                  Percy Liang and
                  Tatsunori B. Hashimoto},
  title        = {Diffusion-LM Improves Controllable Text Generation},
  booktitle    = {Advances in Neural Information Processing Systems 35: Annual Conference on Neural Information Processing Systems, {NeurIPS}},
  year = {2022}
}

@inproceedings{AustinJHTB21Discret1,
  author       = {Jacob Austin and
                  Daniel D. Johnson and
                  Jonathan Ho and
                  Daniel Tarlow and
                  Rianne van den Berg},
  title        = {Structured Denoising Diffusion Models in Discrete State-Spaces},
  booktitle    = {Advances in Neural Information Processing Systems 34: Annual Conference
                  on Neural Information Processing Systems 2021, {NeurIPS}, 2021},
  pages        = {17981--17993},
  year         = {2021},
}

@inproceedings{HeSTWHQ23Discret,
  author       = {Zhengfu He and
                  Tianxiang Sun and
                  Qiong Tang and
                  Kuanning Wang and
                  Xuanjing Huang and
                  Xipeng Qiu},
  title        = {DiffusionBERT: Improving Generative Masked Language Models with Diffusion
                  Models},
  booktitle    = {Proceedings of the 61st Annual Meeting of the Association for Computational Linguistics (Volume 1: Long Papers), {ACL}},
  pages        = {4521--4534},
  year = {2023}
}

@inproceedings{GongLF0K23Continu,
  author       = {Shansan Gong and
                  Mukai Li and
                  Jiangtao Feng and
                  Zhiyong Wu and
                  Lingpeng Kong},
  title        = {DiffuSeq: Sequence to Sequence Text Generation with Diffusion Models},
  booktitle    = {The Eleventh International Conference on Learning Representations,
                  {ICLR}},
  year = {2023},
}

@misc{cobbe2021training,
  title={Training verifiers to solve math word problems},
  author={Cobbe, Karl and Kosaraju, Vineet and Bavarian, Mohammad and Chen, Mark and Jun, Heewoo and Kaiser, Lukasz and Plappert, Matthias and Tworek, Jerry and Hilton, Jacob and Nakano, Reiichiro and others},
  howpublished={arXiv preprint arXiv:2110.14168},
  year={2021}
}

@inproceedings{LightmanKBEBLLS24Math500,
  title        = {Let's Verify Step by Step},
  author       = {Hunter Lightman and
                  Vineet Kosaraju and
                  Yuri Burda and
                  Harrison Edwards and
                  Bowen Baker and
                  Teddy Lee and
                  Jan Leike and
                  John Schulman and
                  Ilya Sutskever and
                  Karl Cobbe},
  booktitle    = {The Twelfth International Conference on Learning Representations, {ICLR}},
  year={2024},
}

@inproceedings{WuZXLDZLHX26,
  title        = {Fast-dLLM: Training-free Acceleration of Diffusion {LLM} by Enabling
                  {KV} Cache and Parallel Decoding},
  author       = {Chengyue Wu and
                  Hao Zhang and
                  Shuchen Xue and
                  Zhijian Liu and
                  Shizhe Diao and
                  Ligeng Zhu and
                  Ping Luo and
                  Song Han and
                  Enze Xie},
  booktitle    = {The Fourteenth International Conference on Learning Representations, {ICLR}},
  year         = {2026},
}

@inproceedings{QiDZWJZ26,
  title        = {Hierarchy Decoding: {A} Training-free Parallel Decoding Strategy for
                  Diffusion Large Language Models},
  author       = {Xiaojing Qi and
                  Lun Du and
                  Xinyuan Zhang and
                  Lanning Wei and
                  Tao Jin and
                  Da Zheng},
  booktitle    = {The Fourteenth International Conference on Learning Representations, {ICLR}},
  year         = {2026},
}

@inproceedings{HongYYHZZWY26,
  title        = {Wide-In, Narrow-Out: Revokable Decoding for Efficient and Effective
                  DLLMs},
  author       = {Feng Hong and
                  Geng Yu and
                  Yushi Ye and
                  Haicheng Huang and
                  Huangjie Zheng and
                  Ya Zhang and
                  Yanfeng Wang and
                  Jiangchao Yao},
  booktitle    = {The Fourteenth International Conference on Learning Representations, {ICLR}},
  year         = {2026},
}

@misc{chen2021humaneval,
      title={Evaluating Large Language Models Trained on Code}, 
      author={Mark Chen and Jerry Tworek and Heewoo Jun and Qiming Yuan and Henrique Ponde de Oliveira Pinto and Jared Kaplan and Harri Edwards and Yuri Burda and Nicholas Joseph and Greg Brockman and Alex Ray and Raul Puri and Gretchen Krueger and Michael Petrov and Heidy Khlaaf and Girish Sastry and Pamela Mishkin and Brooke Chan and Scott Gray and Nick Ryder and Mikhail Pavlov and Alethea Power and Lukasz Kaiser and Mohammad Bavarian and Clemens Winter and Philippe Tillet and Felipe Petroski Such and Dave Cummings and Matthias Plappert and Fotios Chantzis and Elizabeth Barnes and Ariel Herbert-Voss and William Hebgen Guss and Alex Nichol and Alex Paino and Nikolas Tezak and Jie Tang and Igor Babuschkin and Suchir Balaji and Shantanu Jain and William Saunders and Christopher Hesse and Andrew N. Carr and Jan Leike and Josh Achiam and Vedant Misra and Evan Morikawa and Alec Radford and Matthew Knight and Miles Brundage and Mira Murati and Katie Mayer and Peter Welinder and Bob McGrew and Dario Amodei and Sam McCandlish and Ilya Sutskever and Wojciech Zaremba},
      howpublished={arXiv preprint arXiv:2107.03374},
      year={2021},
}

@misc{austin2021mbpp,
      title={Program Synthesis with Large Language Models}, 
      author={Jacob Austin and Augustus Odena and Maxwell Nye and Maarten Bosma and Henryk Michalewski and David Dohan and Ellen Jiang and Carrie Cai and Michael Terry and Quoc Le and Charles Sutton},
      howpublished={arXiv preprint arXiv:2108.07732},
      year={2021},
}

@misc{zhou2023ifeval,
      title={Instruction-Following Evaluation for Large Language Models}, 
      author={Jeffrey Zhou and Tianjian Lu and Swaroop Mishra and Siddhartha Brahma and Sujoy Basu and Yi Luan and Denny Zhou and Le Hou},
      howpublished={arXiv preprint arXiv:2311.07911},
      year={2023},
}

@misc{KimN2026,
      title={The Confidence Shortcut: A Reasoning Failure Mode of Masked Diffusion Models}, 
      author={Dueun Kim and Albert No},
      howpublished={arXiv preprint arXiv:2605.29123},
      year={2026},
}

@misc{BansalCKSS2026,
      title={Walk fast but be careful: Understanding Parallel Sampling in Masked Diffusion}, 
      author={Vansh Bansal and Cholyeon Cho and Syamantak Kumar and Sujay Sanghavi and Purnamrita Sarkar},
      howpublished={arXiv preprint arXiv:2606.22976},
      year={2026},
}

@inproceedings{KimSKKC2025,
  author       = {Jaeyeon Kim and
                  Kulin Shah and
                  Vasilis Kontonis and
                  Sham M. Kakade and
                  Sitan Chen},
  editor       = {Aarti Singh and
                  Maryam Fazel and
                  Daniel Hsu and
                  Simon Lacoste{-}Julien and
                  Felix Berkenkamp and
                  Tegan Maharaj and
                  Kiri Wagstaff and
                  Jerry Zhu},
  title        = {Train for the Worst, Plan for the Best: Understanding Token Ordering in Masked Diffusions},
  booktitle    = {Forty-second International Conference on Machine Learning, {ICML}},
  volume       = {267},
  year         = {2025},
}

@inproceedings{ZhongGZLDJSLZLZ2026,
  author       = {Yangyang Zhong and
                  Yanmei Gu and
                  Zhengqing Zang and
                  Xiaomeng Li and
                  Yuqi Ding and
                  Xibei Jia and
                  Yuting Shen and
                  Zhenzhong Lan and
                  Liwang Zhu and
                  Weiping Liu and
                  Junlin Zhou and
                  Haisheng Liu and
                  Zhong Xin Yu and
                  Pengxin Luo and
                  Donglian Qi and
                  Yunfeng Yan and
                  Junbo Zhao},
  editor       = {Maria Liakata and
                  Viviane P. Moreira and
                  Jiajun Zhang and
                  David Jurgens},
  title        = {Parallelism and Generation Order in Masked Diffusion Language Models: Limits Today, Potential Tomorrow},
  booktitle    = {Findings of the Association for Computational Linguistics, {ACL}},
  pages        = {7178--7218},
  year         = {2026},
}

@InProceedings{LeeKPP2026,
  title = 	 {Lookahead Unmasking Elicits Accurate Decoding in Diffusion Language Models},
  author =       {Lee, Sanghyun and Kim, Seungryong and Park, Jongho and Park, Dongmin},
  booktitle = 	 {Proceedings of the 43rd International Conference on Machine Learning, {ICML}},
  pages = 	 {64158--64174},
  year = 	 {2026},
  volume = 	 {306},
}

@InProceedings{ChenCL2026,
  title = 	 {Optimal Inference Schedules for Masked Diffusion Models},
  author =       {Chen, Sitan and Cong, Kevin and Li, Jerry},
  booktitle = 	 {Proceedings of Thirty Ninth Conference on Learning Theory, {COLT}},
  pages = 	 {1279--1311},
  year = 	 {2026},
  volume = 	 {336},
}
\FloatBarrier
\appendix

\section{Proofs of the Round-Complexity Bounds}
\label{bpd:app:sec:proofs}

\subsection{Deterministically Balanced Pivots}

\begin{proof}[Proof of Proposition~\ref{bpd:main:prop:balanced}]
Let $L_t$ be the largest remaining masked-interval length 
after $t$ rounds, setting $L_t=0$ after completion. 
Initially $L_0=J$. 
Each active interval is processed once per round, 
and each child has length 
at most $\rho$ times its parent length. 
Therefore
\begin{equation}
    L_{t+1}\leq\rho L_t,
    \quad L_t\leq\rho^tJ.
\end{equation}
Choose $t_\star=\lfloor\log J/\log(1/\rho)\rfloor+1$. 
Then $\rho^{t_\star}J<1$. 
Since $L_{t_\star}$ is a nonnegative integer, it must equal zero. 
This proves Eq.~\eqref{bpd:main:eq:bound}. 
The argument also applies to any initial collection of masked intervals 
when $J$ is their maximum length, 
because all intervals advance in parallel.
\end{proof}

\paragraph{A discrete middle-half condition.}
For an interval $I=[a,b]$ of length $m=b-a+1$, define
\begin{equation}
    A(I)=\left\{j\in I\biggm| \max\{j-a,b-j\}\leq\frac{3m}{4}\right\}.
\end{equation}
Selecting any $j\in A(I)$ guarantees 
the desired $1/4$--$3/4$-type split, 
with pivot removal and integer rounding handled explicitly. 
Equivalently,
\begin{equation}
    \max\{j-a,b-j\}=\frac{m-1}{2}+\left|j-\frac{a+b}{2}\right|.
\end{equation}
Thus, if $E_t(I)\cap A(I)\neq\emptyset$, 
the eligible token nearest the center also belongs to $A(I)$. 
However, the practical threshold rule does not guarantee 
that this intersection is nonempty or 
that a fallback pivot is balanced.

\begin{corollary}[Exact midpoint depth]
\label{bpd:main:cor:midpoint}
Exact midpoint selection completes an initially masked interval of length $J$ 
in $\lceil\log_2(J+1)\rceil$ rounds, 
including $T(0)=0$.
\end{corollary}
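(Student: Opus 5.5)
We argue by strong induction on $J$, using the depth recurrence in Eq.~\eqref{bpd:main:eq:depth}, and aim to show that $T(J)=\lceil\log_2(J+1)\rceil$ holds with equality. The base case is $T(0)=0=\lceil\log_2 1\rceil$. For $J\ge1$, the rule in Eq.~\eqref{bpd:main:eq:midpoint} with left tie-breaking picks the pivot $j=a+\lfloor (J-1)/2\rfloor$. This leaves a left child of length $\lfloor (J-1)/2\rfloor$ and a right child of length $\lceil (J-1)/2\rceil$. Both children are again contiguous masked intervals, and in every later round they receive their own midpoint pivots. So the recurrence becomes
\begin{equation*}
T(J)=1+\max\bigl\{T(\lfloor (J-1)/2\rfloor),\,T(\lceil (J-1)/2\rceil)\bigr\},
\end{equation*}
since the two children evolve in parallel and do not interact. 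Committed tokens stay fixed, so a child's subsequent splitting depends only on its own length.

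Next we show that $T$ is nondecreasing. This follows from the closed form once it is proved, but within the induction we can argue directly: $\lceil\log_2(x+1)\rceil$ is nondecreasing in $x$, and the induction hypothesis already covers both children. The maximum is therefore attained at the right child, and the induction hypothesis gives
\begin{equation*}
T(J)=1+\lceil\log_2(\lceil (J-1)/2\rceil+1)\rceil=1+\lceil\log_2\lceil (J+1)/2\rceil\rceil .
\end{equation*}

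The remaining step is the integer identity
\begin{equation*}
1+\lceil\log_2\lceil n/2\rceil\rceil=\lceil\log_2 n\rceil, \qquad n=J+1\ge2 .
\end{equation*}
This is the only step that needs care. We prove it by writing $k=\lceil\log_2 n\rceil\ge1$, so that $2^{k-1}<n\le 2^k$. Then $2^{k-2}<n/2\le 2^{k-1}$. Because $2^{k-1}$ is an integer, taking ceilings preserves the upper bound, giving $\lceil n/2\rceil\le 2^{k-1}$. For the lower bound, $\lceil n/2\rceil\ge n/2>2^{k-2}$ when $k\ge2$, so $\lceil\log_2\lceil n/2\rceil\rceil=k-1$. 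When $k=1$ we have $n=2$, hence $\lceil n/2\rceil=1$ and the identity holds directly.

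As a sanity check, the result is consistent with Proposition~\ref{bpd:main:prop:balanced} with $\rho=1/2$, since midpoint children have length at most $\lfloor J/2\rfloor$. That proposition gives only an upper bound, however, whereas the corollary is exact. This is why we use the exact recurrence rather than invoking the proposition. If only an upper bound were intended, the argument can be shortened: a perfect binary splitting tree of depth $k$ covers $2^k-1$ positions, and the induction then gives $T(J)\le k$ whenever $J\le 2^k-1$.
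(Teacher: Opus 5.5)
Your proof is correct and follows the paper's route: both reduce the depth to the single recurrence through the larger child (your $\lceil (J-1)/2\rceil$ equals the paper's $\lfloor m/2\rfloor$) and then solve it, with your ceiling identity $1+\lceil\log_2\lceil n/2\rceil\rceil=\lceil\log_2 n\rceil$ playing the role of the paper's ``repeated halving gives $\lfloor\log_2 m\rfloor+1$'' step. You also show that the maximum over the two children is attained at the larger one, using the induction hypothesis and monotonicity of the closed form; the paper takes this step for granted when it writes $T(m)=1+T(\lfloor m/2\rfloor)$.
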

\begin{proof}
For an interval of length $m\geq1$, 
midpoint selection leaves a largest child of length $\lfloor m/2\rfloor$. 
Consequently,
\begin{equation}
    T(m)=1+T(\lfloor m/2\rfloor),\quad T(0)=0.
\end{equation}
Repeated halving gives $T(m)=\lfloor\log_2 m\rfloor+1=\lceil\log_2(m+1)\rceil$ 
for integer $m\geq1$.
\end{proof}

\subsection{Expected Depth under Probabilistic Balance}

\begin{proposition}[Probabilistically balanced pivots]
\label{bpd:main:prop:probabilistic}
Start from an interval of length $J\geq1$, 
and commit one pivot in every active interval each round. 
Conditional on the entire history before that round, 
suppose each interval's pivot satisfies Eq.~\eqref{bpd:main:eq:balance} 
with probability at least $q>0$, 
where $q$ and $\rho<1$ are constants independent of $J$. 
No independence between intervals or rounds is required. 
Set $\alpha=1-q(1-\rho)$. Then
\begin{equation}
    \Pr(T>t)\leq\min\{1,J^2\alpha^t\},
\end{equation}
\begin{equation}
    \mathbb E[T]\leq
    \left\lceil\frac{2\log J}{-\log\alpha}\right\rceil
    +\frac{1}{1-\alpha}
    =O\!\left(\frac{\log(J+1)}{q(1-\rho)}\right).
\end{equation}
\end{proposition}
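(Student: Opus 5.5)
We plan a potential-function argument with a supermartingale. The potential $\Phi_t=\sum_{I\in\mathcal I_t}|I|^2$ is too crude. A cleaner choice is a per-position potential: for each masked position $i$ still unresolved after $t$ rounds, let $\ell_t(i)$ be the length of the interval containing it, and set
\begin{equation*}
\Phi_t=\sum_{I\in\mathcal I_t}|I|\cdot|I| = \sum_{i\text{ masked}}\ell_t(i).
\end{equation*}
Initially $\Phi_0=J^2$. Decoding has not finished after $t$ rounds exactly when some interval is nonempty, and then $\Phi_t\geq1$ because lengths are integers. Markov's inequality therefore gives
\begin{equation*}
\Pr(T>t)\le \mathbb E[\Phi_t].
\end{equation*}
The target is then $\mathbb E[\Phi_{t+1}\mid\mathcal F_t]\le\alpha\Phi_t$, where $\mathcal F_t$ is the history before round $t+1$. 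Iterating this gives $\mathbb E[\Phi_t]\le J^2\alpha^t$.

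The key step is a per-interval contraction. Fix an active interval $I$ of length $m$ with children $I_L,I_R$, where $|I_L|+|I_R|=m-1$. Its contribution changes from $m^2$ to $|I_L|^2+|I_R|^2$.
\begin{itemize}
\item Always, $|I_L|^2+|I_R|^2\le \max\{|I_L|,|I_R|\}\,(m-1)\le m^2$, so the contribution never grows.
\item On the balance event, $\max\{|I_L|,|I_R|\}\le\rho m$, so the contribution is at most $\rho m\cdot m=\rho m^2$.
\end{itemize}
Conditioning on $\mathcal F_t$, the balance event has probability at least $q$, so
\begin{equation*}
\mathbb E[\text{contribution}]\le (1-q)m^2+q\rho m^2=\alpha m^2 .
\end{equation*}
Summing over the active intervals by linearity of conditional expectation gives the supermartingale bound. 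This step needs no independence between intervals, since each term is handled with its own conditional probability bound and then summed. Combining with the trivial bound $\Pr(T>t)\le1$ yields the tail estimate.

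For the expectation, write $\mathbb E[T]=\sum_{t\ge0}\Pr(T>t)$ and split at $t_0=\lceil 2\log J/(-\log\alpha)\rceil$.
\begin{itemize}
\item For $t<t_0$, bound each term by $1$, contributing $t_0$.
\item For $t\ge t_0$, we have $J^2\alpha^{t_0}\le1$, so the tail sum is at most $\sum_{s\ge0}\alpha^s=1/(1-\alpha)$.
\end{itemize}
The asymptotic form follows from $-\log\alpha\ge 1-\alpha=q(1-\rho)$ together with $1/(1-\alpha)=1/(q(1-\rho))$. The $J=1$ case is consistent with this bound, since there $t_0=0$.

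The main obstacle is the contraction step. It requires a potential that never increases even under an arbitrarily unbalanced pivot, since the pivot is adversarial off the balance event. It also requires a rigorous conditioning argument: the balance event is $\mathcal F_{t+1}$-measurable, while the set of active intervals is $\mathcal F_t$-measurable. The quadratic potential handles both, because a split always removes the pivot and the largest-child bound controls the sum of squares.
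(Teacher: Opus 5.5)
Your proof is correct and is essentially the paper's argument. You use the same quadratic potential $\Phi_t=\sum_{I\in\mathcal I_t}|I|^2$ and the same per-interval bounds ($\le\rho m^2$ on a balanced split, $\le m^2$ otherwise), summed by linearity of conditional expectation to get $\mathbb E[\Phi_{t+1}\mid\mathcal F_t]\le\alpha\Phi_t$; you then apply Markov's inequality to $\{\Phi_t\ge1\}$ and split $\sum_t\Pr(T>t)$ at $t_0$, exactly as the paper does. The only slip is expository: your ``per-position'' potential $\sum_i\ell_t(i)$ is identical to $\sum_I|I|^2$, so drop the remark that the latter is ``too crude.''
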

\begin{proof}
Let $\mathcal F_t$ denote the history after $t$ rounds 
and define the potential
\begin{equation}
    \Phi_t=\sum_{I\in\mathcal I_t}|I|^2.
\end{equation}
For an interval of length $m$, the child lengths $\ell,r$ satisfy $\ell+r=m-1$. On a balanced split,
\begin{equation}
    \ell^2+r^2\leq\max\{\ell,r\}(\ell+r)
    \leq\rho m(m-1)\leq\rho m^2.
\end{equation}
On any other split, $\ell^2+r^2\leq(\ell+r)^2\leq m^2$. The conditional balance assumption and linearity of expectation therefore imply
\begin{equation}
    \mathbb E[\Phi_{t+1}\mid\mathcal F_t]
    \leq\bigl(q\rho+1-q\bigr)\Phi_t=\alpha\Phi_t.
\end{equation}
Since $\Phi_0=J^2$, iteration gives $\mathbb E[\Phi_t]\leq J^2\alpha^t$. 
Decoding is unfinished exactly when $\Phi_t\geq1$, 
so Markov's inequality yields the stated tail bound. 
Let $t_0=\lceil2\log J/(-\log\alpha)\rceil$. 
Summing the tail probabilities gives
\begin{align}
    \mathbb E[T]
    &=\sum_{t=0}^{\infty}\Pr(T>t)\\
    &\leq t_0+\sum_{t=t_0}^{\infty}J^2\alpha^t
    \leq t_0+\frac{1}{1-\alpha}.
\end{align}
Finally, $-\log\alpha\geq1-\alpha=q(1-\rho)$ gives the asymptotic bound.
\end{proof}

\paragraph{Uniform random pivots.}
At least half the positions in any nonempty interval 
belong to $A(I)$. 
Indeed, with local ranks $r\in\{1,\ldots,m\}$, 
the condition is $\lceil m/4\rceil\leq r\leq\lfloor3m/4\rfloor+1$, 
intersected with $\{1,\ldots,m\}$. 
Thus uniform random pivoting satisfies the proposition 
with $q=1/2$ and $\rho=3/4$, giving expected logarithmic depth. 
This scheduling fact does not establish that random pivoting preserves generation quality. 
Nor does it imply that confidence-based pivots have 
the same conditional balance probability.

\subsection{Bulk Initialization and the Practical Rule}

Suppose the first $K$ iterations have completed and masks remain. 
Let $R$ be the largest remaining interval length. 
If every subsequent pivot is $\rho$-balanced, 
the deterministic result gives
\begin{equation}
    T_{\mathrm{total}}\leq K+
    \left\lfloor\frac{\log R}{\log(1/\rho)}\right\rfloor+1.
\end{equation}
If no masks remain, $T_{\mathrm{total}}\leq K$. 
Bulk commits can only delete positions and split intervals, 
so $R\leq B$. 
This is a conditional depth bound. 
It does not assert that bulk initialization always reduces 
the actual runtime relative to another schedule.

For the unrestricted confidence-based rule, 
every unfinished iteration commits 
at least one non-MASK token 
and never remasks a committed position. 
Hence $T_{\mathrm{total}}\leq B$. 
This bound can be attained.
If bulk candidates are absent 
and each interval's unique highest-confidence token 
is always its left endpoint, 
with all confidences below the pivot threshold, 
the rule commits just one endpoint per round. 
This gives $T(B)=B$ and shows 
why an unconditional logarithmic claim 
for the practical rule would be incorrect.

\section{Detailed Algorithms}
\label{bpd:app:sec:detailed_algorithms}

\textsc{Cache} denotes the construction of the KV cache 
for the prefix preceding the current block $J$.
For clarity, cache construction is shown separately. 
In practice, it shares a single model forward pass 
with the first \textsc{ForwardWithCache} call.
The resulting prefix cache remains fixed 
throughout decoding of $J$ 
and is refreshed at the next block boundary.
\textsc{ForwardWithCache} returns token probabilities over $J$,
excluding $\mathtt{MASK}$ from the predictive distribution.
Ties in positional selection are broken to the left.
We assume $B\mid G$.

\begin{algorithm*}[t]
\caption{Confidence-guided decoding with bulk initialization and center pivots}
\label{bpd:app:alg:center_pivot}
\begin{algorithmic}[1]
\Require Model $f_\theta$, prompt $u$, generation length $G$, block size $B$
\Require Bulk steps $K$, thresholds $\tau_{\mathrm{bulk}}$ and $\tau_{\mathrm{pivot}}$
\State $y \gets \mathtt{MASK}^{G}$
\For{each block $J$ of size $B$ in $y$, from left to right}
    \State $t \gets 0$
    \State $C \gets \Call{Cache}{f_\theta,u,y,J}$
    \While{$y_J$ contains masks}
        \State $p \gets \Call{ForwardWithCache}{f_\theta,u,y,J,C}$
        \State $\widehat{y}_i \gets \arg\max_v p_i(v)$
        \State $c_i \gets p_i(\widehat{y}_i)$ for each $i \in J$
        \State $M \gets \{i \in J : y_i=\mathtt{MASK}\}$ 
        \State $S \gets \emptyset$
        \If{$t<K$}
            \State $S \gets \{i \in M : c_i \geq \tau_{\mathrm{bulk}}\}$
        \EndIf
        \If{$S=\emptyset$}
            \For{each maximal contiguous interval $I=[a,b]$ in $M$}
                \State $E \gets \{i \in I : c_i \geq \tau_{\mathrm{pivot}}\}$
                \If{$E \neq \emptyset$}
                    \State $j \gets \arg\min_{i \in E}|i-(a+b)/2|$
                \Else
                    \State $j \gets \arg\max_{i \in I}c_i$
                \EndIf
                \State $S \gets S \cup \{j\}$
            \EndFor
        \EndIf
        \State $y_i \gets \widehat{y}_i$ for all $i \in S$ simultaneously
        \State $t \gets t+1$
    \EndWhile
\EndFor
\State \Return $y$
\end{algorithmic}
\end{algorithm*}

All selections in an iteration use 
the same pre-update model output.
During the first three iterations of each block, 
a nonempty bulk set replaces interval-wise selection.
If it is empty, the pivot rule applies immediately.
The pivot rule commits exactly one token 
per remaining masked interval ($n=1$).
Committed tokens are not remasked. 
Prefix KV is refreshed at each block boundary.

\section{Experimental Details}
\label{bpd:app:sec:experimental_details}

\subsection{Datasets, Metrics and Prompts}
\label{bpd:app:ssec:datasets_metrics_prompts}

\paragraph{Datasets.}
We evaluate on five benchmarks spanning mathematical reasoning,
code generation, and instruction following: GSM8K, MATH500,
HumanEval, MBPP, and IFEval. 
This selection follows the common practice of 
evaluating diffusion decoding methods 
on mathematical reasoning and code generation, 
as in Fast-dLLM, WINO, and Hierarchy-dLLM.
We additionally include IFEval to assess whether accelerated
decoding preserves adherence to explicit instructions.
Table~\ref{bpd:app:tab:datasets} summarizes the benchmarks and evaluation split sizes.

\begin{table*}[t]
    \centering
    \small
    \begin{tabularx}{\textwidth}{@{}lXr@{}}
        \toprule[1.2pt]
        Dataset & Description & Examples \\
        \midrule
        GSM8K~\citep{cobbe2021training}
            & Grade-school mathematical word problems
              requiring multistep reasoning.
            & 1,319 \\
        MATH500~\citep{LightmanKBEBLLS24Math500}
            & A 500-problem subset of MATH covering
              competition-level mathematics.
            & 500 \\
        HumanEval~\citep{chen2021humaneval}
            & Python function completion from signatures
              and docstrings, evaluated with unit tests.
            & 164 \\
        MBPP~\citep{austin2021mbpp}
            & Basic Python programming problems with
              natural-language specifications and tests.
            & 500 \\
        IFEval~\citep{zhou2023ifeval}
            & Instruction-following prompts with
              automatically verifiable constraints.
            & 541 \\
        \bottomrule[1.2pt]
    \end{tabularx}
    \caption{Evaluation benchmarks and split sizes.}
    \label{bpd:app:tab:datasets}
\end{table*}

\paragraph{Metrics.}
For GSM8K, we report numerical answer accuracy after
normalization. We extract the answer following the final
\texttt{\#\#\#\#} marker, falling back to the last number in the
response when the marker is absent.
For MATH500, we extract the final boxed answer and use normalized
string matching, allowing an absolute error below $10^{-6}$
when both answers can be parsed as numbers.
This evaluator does not check symbolic equivalence.

For HumanEval and MBPP, we report Pass@1 using one greedy
completion per problem. For $N$ evaluation examples,
we compute
\begin{equation}
    \mathrm{Pass@1} =
    \frac{1}{N}
    \sum_{x=1}^{N}
    \mathbf{1}\left[
        x\text{ passes all tests}
    \right].
\end{equation}
HumanEval and MBPP use their unit tests.
For IFEval, we report both prompt-level and instruction-level
accuracy under the original strict checking rules.
All decoding methods share the same task-specific evaluators.

We additionally measure decoding depth using
the mean number of model forward steps.
Let $S_{x,b}$ denote the number of forward passes
required to decode block $b$ of example $x$.
We compute
\begin{equation}
    \overline{S} =
    \frac{1}{N}
    \sum_{x=1}^{N}
    \sum_{b=1}^{G/B} S_{x,b}.
\end{equation}
All decoding iterations are included,
including confidence-based prefilling.
This metric measures sequential model evaluations
over the complete generated sequence.

\paragraph{Pivot-Balance Metrics.}
To examine the interval contraction property
underlying Proposition~\ref{bpd:main:prop:balanced},
we measure the balanced pivot rate (BPR).
For a pivot $j$ selected from an interval
$I=[a,b]$, define the larger-child ratio as
\begin{equation}
    r(I,j) =
    \frac{\max\{j-a,b-j\}}{|I|}.
\end{equation}
A pivot is balanced if $r(I,j)\leq3/4$.
For a collection of pivot events $\mathcal D$,
we define
\begin{equation}
    \mathrm{BPR}(\mathcal D) =
    \frac{1}{|\mathcal D|}
    \sum_{(I,j)\in\mathcal D}
    \mathbf{1}\left[
        r(I,j)\leq\frac34
    \right].
    \label{bpd:app:eq:bpr}
\end{equation}

Since active masked intervals are decoded concurrently,
the sequential decoding depth is determined by
the longest branch of the interval-splitting process.
For each block, we reconstruct the splitting tree
from the recorded mask transitions
and identify a longest root-to-leaf path,
breaking ties in favor of the leftmost path.
Let $\mathcal C$ denote the collection of
pivot events along these critical paths.
We define the critical-path balanced pivot rate
(CP-BPR) as
\begin{equation}
    \mathrm{CP\text{-}BPR} =
    \mathrm{BPR}(\mathcal C).
\end{equation}

For \ours{}, we measure pivot balance
after the first $K=3$ iterations of each block,
excluding the initial prefilling window.
Intervals of length at most four are excluded,
since every pivot in such intervals satisfies
the $3/4$-balance condition.
Only single-pivot commitments are counted.

For \ours{}, we further distinguish confidence-eligible
pivots, selected when $E_t(I)\neq\emptyset$,
from fallback pivots, selected when
$E_t(I)=\emptyset$.
Their respective BPR values are computed using
Eq.~\eqref{bpd:app:eq:bpr}
over the corresponding event sets.

\paragraph{Prompts.}
We use zero-shot prompts and apply each model's native chat
template with a single user message and an assistant generation
prefix. We retain the original problem statements and add
task-specific output instructions.
GSM8K requests a step-by-step solution ending with
\texttt{\#\#\#\# <answer>}, while MATH500 requests a step-by-step
solution with the final answer enclosed in \verb|\boxed{}|.
HumanEval requests a complete Python function in a Python code
block, given the original function signature and docstring.
MBPP supplies the task description and its provided tests,
and requests a complete solution in a Python code block.
IFEval uses the original prompt without additional task
instructions. Within each model, these prompts are identical
across decoding methods.

\subsection{Models}

We use LLaDA-8B-Instruct~\citep{nie2025llada}, LLaDA-1.5~\citep{ZhuWNZWZLWL2026} and
Dream-v0-Instruct-7B~\citep{ye2025dream}, with the public
checkpoints \texttt{GSAI-ML/LLaDA-8B-Instruct}, \texttt{GSAI-ML/LLaDA-1.5}, and
\texttt{Dream-org/Dream-v0-Instruct-7B}, respectively.
All three are open masked diffusion language models that support
iterative token prediction with bidirectional context,
making them suitable for evaluating alternative decoding
orders and parallel commitment strategies.
They also represent different training approaches:
LLaDA is pretrained from scratch with a masked diffusion
objective, whereas Dream is initialized from a pretrained
autoregressive language model and adapted to diffusion
modeling. Evaluating both allows us to assess whether the
proposed decoding strategy transfers across these model
families at a similar parameter scale.
These families also provide continuity with prior decoding
studies, including Fast-dLLM and Hierarchy-dLLM.
We use the instruction-tuned checkpoints to support the
shared conversational prompting protocol across all five
tasks. Model parameters remain frozen, and all methods
use greedy token predictions without additional training.

\subsection{Baselines}
\label{bpd:app:ssec:baselines}

We compare against six training-free decoding baselines
covering fixed-budget unmasking, confidence-based parallel
decoding, dependency-aware token selection,
iterative revision, and hierarchical interval decomposition.

\paragraph{Vanilla.}
Vanilla uses the standard confidence-based unmasking schedule
in our evaluation framework. Within each generation block,
it ranks masked positions by the probability of their predicted
tokens and commits the highest-confidence predictions according
to a predefined per-step token budget. The remaining positions
stay masked, and committed tokens remain fixed.
With one decoding step per generated token, this schedule
commits one token per iteration. Our implementation recomputes
the full sequence without KV caching.

\paragraph{WINO.}
Wide-In, Narrow-Out~\citep{HongYYHZZWY26} combines parallel
drafting with verification of previously committed tokens.
It accepts multiple sufficiently confident predictions and
uses an auxiliary masked copy of the current block to evaluate
committed tokens without exposing each token to its own
verifier prediction. Tokens with low verification confidence
can be remasked and reconsidered in subsequent iterations.
This allows the decoder to revise earlier commitments as
additional context becomes available.

\paragraph{Fast-dLLM.}
Fast-dLLM~\citep{WuZXLDZLHX26} combines confidence-aware parallel
decoding with blockwise KV-cache reuse. Its parallel selection
rule commits masked-token predictions whose confidence exceeds
a threshold, with a highest-confidence fallback to ensure
progress. The number of tokens committed therefore adapts to
model confidence. Its caching mechanism reuses context
representations to reduce the computation required by
subsequent decoding iterations.

\paragraph{Hierarchy-dLLM.}
Hierarchy-dLLM~\citep{QiDZWJZ26} recursively partitions
unresolved positions into contiguous sub-decoding areas and
processes these areas in parallel. It combines a high confidence
threshold for parallel commitments with a lower threshold for
selecting the most confident candidate within each area,
together with a fallback that ensures progress.
It also supports remasking low-confidence committed tokens.
We use its remasking variant without KV caching.
Its confidence-based selection within each area provides
a direct comparison for the center preference in our pivot rule.

\paragraph{DAPD}
Dependency-Aware Parallel Decoding~\cite{KimJJ26DPAD} introduced a dependency-aware parallel decoding strategy that modeled token dependencies through a graph constructed from self-attention scores. 
It represented masked positions as graph nodes and connected token pairs with strong dependencies. 
The method then identified an independent set of tokens for parallel unmasking to reduce interference among dependent predictions. 
This graph-based selection strategy aimed to improve decoding efficiency while preserving generation quality. 

\paragraph{DOS}
Dependency-Oriented Sampler~\citep{ZhouHH26DOS} proposed an attention-based decoding strategy that determined the unmasking order according to token dependencies on the observed context. 
It estimated the dependency score of each masked position by aggregating its attention weights toward unmasked tokens. 
The method prioritized positions with higher dependency scores to leverage the available contextual information during generation. 
This selection mechanism aimed to improve prediction reliability by determining the decoding order from contextual dependencies rather than token confidence alone. 

\paragraph{Bisection Variants.}
For the pivot-balance analysis, we consider four
balanced bisection strategies.
The first three are adapted from
\citet{BansalCKSS2026} to our decoding framework.

\emph{Bisection} recursively selects the midpoint
of each active masked interval.
\emph{Bisection (Margin)} and
\emph{Bisection (Entropy)} restrict pivot candidates
to the centered half of each interval,
following the score-guided bisection principle.
Margin selects the position with the largest difference
between the two highest token probabilities,
whereas Entropy selects the position
with the lowest predictive entropy.

We additionally introduce
\emph{Bisection (Random)}
to examine balanced pivot selection
without predictive confidence.
For each interval $I=[a,b]$,
we define the balanced region
\begin{equation}
    A_{\rho}(I)=
    \left\{
    j\in I \;\middle|\;
    \max\{j-a,b-j\}\leq\rho|I|
    \right\},
\end{equation}
where $\rho=3/4$.
At each iteration, this variant samples
one pivot uniformly from the balanced region:
\begin{equation}
    j_t(I)\sim\operatorname{Uniform}(A_{\rho}(I)).
\end{equation}
Unlike the uniform random pivoting analyzed
in Appendix~\ref{bpd:app:sec:proofs},
this variant restricts sampling to the
balanced region and therefore guarantees
the balance condition at every split.

All four bisection variants select one pivot
per active masked interval at each iteration
and commit the corresponding greedy predictions
in parallel.
Following the prefilling phase, 
their pivots satisfy the $3/4$-balance condition
by construction.

\subsection{Hyperparameter Settings}
\label{bpd:app:ssec:hyperparameter_settings}

Unless otherwise specified, we use a generation length
of $G=256$ and a block size of $B=32$.
\ours{} uses $K=3$ initial prefilling iterations,
with confidence thresholds
$\tau_{\mathrm{bulk}}=\tau_{\mathrm{pivot}}=0.9$.
All methods use greedy token predictions.

\paragraph{Pivot Selection Comparison.}
We compare five pivot selection strategies
on HumanEval using LLaDA-8B-Instruct,
with 164 evaluation problems.
All methods use identical prompts,
a generation length of 256,
a block size of 32, and prefix KV caching.
Each strategy is evaluated with and without
confidence-based prefilling.
The prefilling-enabled configurations use
$K=3$ and $\tau_{\mathrm{bulk}}=0.9$.
\ours{} additionally uses
$\tau_{\mathrm{pivot}}=0.9$ for
confidence-thresholded pivot selection.

Exact, Margin, and Entropy follow
the bisection strategies of \citet{BansalCKSS2026},
while Random is our additional balanced-region
sampling baseline.
Unlike these four strategies, which enforce
the $3/4$-balance condition during pivot selection,
\ours{} uses confidence-thresholded center selection
with a highest-confidence fallback.
Further implementation details are provided in
Appendix~\ref{bpd:app:ssec:baselines}.

\subsection{Experimental Environments}
All experiments were conducted on a server equipped with
an NVIDIA RTX A6000 GPU (48 GB) and AMD EPYC 9354 32-Core CPU,
running Rocky Linux 9.4 (Blue Onyx), kernel 5.14.0-427.22.1e9\_4.x86\_64.
The software environment used Python 3.11.0,
PyTorch 2.6.0+cu124, and CUDA 12.4.
The main dependencies included Accelerate 1.15.0,
Datasets 5.0.1, Scikit-learn 1.9.1,
and Transformers 4.52.3.

\section{Additional Experimental Results}

\subsection{Additional Baseline Configurations}
\label{bpd:app:ssec:baseline_configurations}


\begin{table*}[t!]
\centering
\small
\setlength{\tabcolsep}{4.5pt}
\begin{tabular}{llccccccccc}
\toprule
\multirow{2}{*}[-2pt]{Dataset}
& \multirow{2}{*}[-2pt]{Method}
& \multicolumn{3}{c}{LLaDA-8B-Instruct}
& \multicolumn{3}{c}{LLaDA-1.5}
& \multicolumn{3}{c}{Dream-7B} \\
\cmidrule(lr){3-5}
\cmidrule(lr){6-8}
\cmidrule(lr){9-11}
& & Score $\uparrow$
  & TPS $\uparrow$
  & QTG $\uparrow$
  & Score $\uparrow$
  & TPS $\uparrow$
  & QTG $\uparrow$
  & Score $\uparrow$
  & TPS $\uparrow$
  & QTG $\uparrow$ \\
\midrule

\multirow{8}{*}{GSM8K}
& Vanilla
& 81.65 & 12.02 & 1.00$\times$
& 81.88 & 11.92 & 1.00$\times$
& 82.87 & 14.62 & 1.00$\times$ \\
& Fast-dLLM~(prefix)
& 81.50 & 52.83 & 4.39$\times$
& 81.58 & 54.96 & 4.59$\times$
& 80.52 & 74.60 & 4.96$\times$ \\
& Fast-dLLM~(dual)
& 80.36 & 60.37 & \underline{4.94}$\times$
& 82.03 & 62.16 & \underline{5.22}$\times$
& 82.11 & 81.74 & \underline{5.54}$\times$ \\
& Fast-dLLM~(parallel)
& 82.18 & 35.50 & 2.97$\times$
& 82.03 & 36.41 & 3.06$\times$
& 34.19 & 26.46 & 0.75$\times$ \\
& DOS~(top-1)
& 80.36 & 12.04 & 0.99$\times$
& 82.64 & 12.04 & 1.02$\times$
& 82.71 & 14.84 & 1.01$\times$ \\
& DOS~(eb)
& 79.98 & 22.32 & 1.82$\times$
& 82.18 & 23.39 & 1.97$\times$
& 82.49 & 32.75 & 2.23$\times$ \\
& \CCG{}Ours~(prefix)
& \CCG{}79.98 & \CCG{}54.50
& \CCG{}4.44$\times$
& \CCG{}80.59 & \CCG{}56.10
& \CCG{}4.63$\times$
& \CCG{}79.91 & \CCG{}82.87
& \CCG{}5.47$\times$ \\
& \CCG{}Ours~(dual)
& \CCG{}78.92 & \CCG{}67.26
& \CCG{} \textbf{5.41}$\times$
& \CCG{}80.06 & \CCG{}68.64
& \CCG{} \textbf{5.63}$\times$
& \CCG{}79.45 & \CCG{}90.30
& \CCG{} \textbf{5.92}$\times$ \\
\midrule

\multirow{8}{*}{MATH-500}
& Vanilla
& 38.60 & 12.13 & 1.00$\times$
& 37.40 & 12.07 & 1.00$\times$
& 42.00 & 14.70 & 1.00$\times$ \\
& Fast-dLLM~(prefix)
& 37.00 & 41.06 & 3.24$\times$
& 36.20 & 42.02 & 3.37$\times$
& 42.60 & 54.40 & 3.75$\times$ \\
& Fast-dLLM~(dual)
& 36.40 & 44.52 & 3.46$\times$
& 37.00 & 46.51 & \underline{3.81}$\times$
& 42.00 & 56.76 & 3.86$\times$ \\
& Fast-dLLM~(parallel)
& 37.60 & 25.69 & 2.06$\times$
& 38.00 & 26.43 & 2.22$\times$
& 15.60 & 20.44 & 0.52$\times$ \\
& DOS~(top-1)
& 35.80 & 12.10 & 0.93$\times$
& 36.60 & 11.95 & 0.97$\times$
& 41.20 & 14.84 & 0.99$\times$ \\
& DOS~(eb)
& 34.60 & 17.27 & 1.28$\times$
& 35.80 & 18.05 & 1.43$\times$
& 39.60 & 21.19 & 1.36$\times$ \\
& \CCG{}Ours~(prefix)
& \CCG{}37.00 & \CCG{}44.20
& \CCG{}\underline{3.49}$\times$
& \CCG{}37.00 & \CCG{}44.95
& \CCG{}3.68$\times$
& \CCG{}40.40 & \CCG{}62.52
& \CCG{}\underline{4.09}$\times$ \\
& \CCG{}Ours~(dual)
& \CCG{}36.00 & \CCG{}53.91
& \CCG{} \textbf{4.14}$\times$
& \CCG{}35.20 & \CCG{}54.85
& \CCG{} \textbf{4.28}$\times$
& \CCG{}40.40 & \CCG{}63.48
& \CCG{} \textbf{4.15}$\times$ \\
\midrule

\multirow{8}{*}{HumanEval}
& Vanilla
& 35.98 & 10.44 & 1.00$\times$
& 39.63 & 10.34 & 1.00$\times$
& 59.76 & 11.79 & 1.00$\times$ \\
& Fast-dLLM~(prefix)
& 39.02 & 79.06 & \textbf{8.21$\times$}
& 38.41 & 78.54 & \textbf{7.36$\times$}
& 60.37 & 91.39 & 7.83$\times$ \\
& Fast-dLLM~(dual)
& 35.98 & 79.05 & 7.57$\times$
& 38.41 & 76.53 & 7.17$\times$
& 57.93 & 111.88 & \textbf{9.20}$\times$ \\
& Fast-dLLM~(parallel)
& 34.76 & 48.71 & 4.51$\times$
& 38.41 & 44.85 & 4.20$\times$
& 29.88 & 27.60 & 1.17$\times$ \\
& DOS~(top-1)
& 35.98 & 10.38 & 0.99$\times$
& 39.02 & 10.40 & 0.99$\times$
& 57.93 & 11.85 & 0.97$\times$ \\
& DOS~(eb)
& 34.76 & 26.93 & 2.49$\times$
& 37.80 & 27.76 & 2.56$\times$
& 57.32 & 34.08 & 2.77$\times$ \\
& \CCG{}Ours~(prefix)
& \CCG{}35.98 & \CCG{}78.29
& \CCG{}7.50$\times$
& \CCG{}37.20 & \CCG{}77.83
& \CCG{}7.07$\times$
& \CCG{}54.88 & \CCG{}107.61
& \CCG{}\underline{8.38$\times$} \\
& \CCG{}Ours~(dual)
& \CCG{}34.76 & \CCG{}87.95
& \CCG{} \underline{8.14}$\times$
& \CCG{}34.15 & \CCG{}86.95
& \CCG{} \underline{7.25}$\times$
& \CCG{}53.66 & \CCG{}120.78
& \CCG{} \textbf{9.20}$\times$ \\
\midrule

\multirow{8}{*}{MBPP}
& Vanilla
& 39.60 & 10.78 & 1.00$\times$
& 40.00 & 10.74 & 1.00$\times$
& 58.00 & 12.69 & 1.00$\times$ \\
& Fast-dLLM~(prefix)
& 38.00 & 52.01 & 4.63$\times$
& 37.80 & 52.89 & 4.65$\times$
& 57.60 & 112.33 & 8.79$\times$ \\
& Fast-dLLM~(dual)
& 37.60 & 58.82 & \underline{5.18}$\times$
& 36.80 & 58.60 & 5.02$\times$
& 54.80 & 134.80 & \underline{10.04}$\times$ \\
& Fast-dLLM~(parallel)
& 39.20 & 30.23 & 2.78$\times$
& 39.80 & 31.37 & 2.91$\times$
& 44.80 & 26.38 & 1.61$\times$ \\
& DOS~(top-1)
& 36.80 & 10.78 & 0.93$\times$
& 40.20 & 10.79 & 1.01$\times$
& 57.60 & 12.46 & 0.98$\times$ \\
& DOS~(eb)
& 40.20 & 20.58 & 1.94$\times$
& 39.80 & 21.28 & 1.97$\times$
& 57.80 & 54.23 & 4.26$\times$ \\
& \CCG{}Ours~(prefix)
& \CCG{}38.20 & \CCG{}55.90
& \CCG{}5.00$\times$
& \CCG{}38.60 & \CCG{}56.69
& \CCG{}\underline{5.09}$\times$
& \CCG{}53.00 & \CCG{}131.36
& \CCG{}9.46$\times$ \\
& \CCG{}Ours~(dual)
& \CCG{}35.00 & \CCG{} 69.09
& \CCG{} \textbf{5.66}$\times$
& \CCG{}37.20 & \CCG{} 69.00
& \CCG{} \textbf{5.97}$\times$
& \CCG{}52.80 & \CCG{} 147.81
& \CCG{} \textbf{10.60}$\times$ \\
\midrule

\multirow{8}{*}{IFEval}
& Vanilla
& 56.01 & 12.26 & 1.00$\times$
& 58.60 & 12.98 & 1.00$\times$
& 51.39 & 16.24 & 1.00$\times$ \\
& Fast-dLLM~(prefix)
& 53.79 & 30.09 & 2.36$\times$
& 57.49 & 31.41 & 2.37$\times$
& 52.31 & 47.52 & 2.98$\times$ \\
& Fast-dLLM~(dual)
& 54.34 & 39.07 & 3.09$\times$
& 56.56 & 36.24 & 2.69$\times$
& 51.02 & 53.67 & 3.28$\times$ \\
& Fast-dLLM~(parallel)
& 55.82 & 22.00 & 1.79$\times$
& 57.86 & 20.78 & 1.58$\times$
& 23.11 & 42.58 & 1.18$\times$ \\
& DOS~(top-1)
& 60.63 & 12.72 & 1.12$\times$
& 59.15 & 12.96 & 1.01$\times$
& 56.38 & 16.15 & 1.09$\times$ \\
& DOS~(eb)
& 59.89 & 19.77 & 1.72$\times$
& 60.44 & 19.52 & 1.55$\times$
& 56.93 & 25.17 & 1.72$\times$ \\
& \CCG{}Ours~(prefix)
& \CCG{}51.20 & \CCG{}43.66
& \CCG{}\underline{3.26}$\times$
& \CCG{}54.71 & \CCG{}43.48
& \CCG{}\underline{3.13}$\times$
& \CCG{}51.57 & \CCG{}63.94
& \CCG{}\underline{3.95}$\times$ \\
& \CCG{}Ours~(dual)
& \CCG{}51.02 & \CCG{} 55.78
& \CCG{} \textbf{4.14}$\times$
& \CCG{}54.53 & \CCG{} 54.21
& \CCG{} \textbf{3.89}$\times$
& \CCG{}51.39 & \CCG{} 70.51
& \CCG{} \textbf{4.34}$\times$ \\
\bottomrule
\end{tabular}
\caption{
Additional results on five benchmarks using LLaDA-8B-Instruct, LLaDA-1.5, and Dream-7B.
We report generation quality (Score), throughput measured
in tokens per second (TPS), and Quality-Throughput Gain (QTG).
QTG is the product of relative throughput and relative
task performance with respect to vanilla.
}
\label{bpd:app:tab:additional_results}
\end{table*}

We further evaluate multiple configurations of
Fast-dLLM, DOS, and \ours{} across five benchmarks
and three diffusion language models, as reported in
Table~\ref{bpd:app:tab:additional_results}.
For Fast-dLLM, we consider prefix, dual, and
parallel configurations, while DOS is evaluated
with top-1 and entropy-bounded (eb) selection.
We also evaluate \ours{} with both prefix and
dual caching to examine the effect of different
caching configurations under the same pivot
selection strategy.

Dual caching generally improves throughput,
but often comes at the cost of generation quality.
For \ours{}, dual caching increases throughput
and QTG in all 15 settings, while reducing task
scores in 14 settings and preserving them in one.
For example, on LLaDA-8B-Instruct with MBPP,
dual caching improves throughput from 55.90 to
69.09 TPS, while Pass@1 decreases from
38.20\% to 35.00\%.
We therefore adopt prefix caching for both
Fast-dLLM and \ours{} in the main experiments,
prioritizing generation quality over the
additional acceleration offered by dual caching.
For DOS, we use the entropy-bounded configuration
as its accelerated decoding variant.

Nevertheless, \ours{} remains competitive under
the more aggressive dual caching configuration.
It achieves the highest QTG in 12 of the 15
settings, with a near tie against Fast-dLLM (dual)
on Dream HumanEval.
The remaining two settings are HumanEval on
the LLaDA models, where Fast-dLLM (prefix)
achieves higher QTG.
These results show that the effectiveness of
\ours{} is not limited to prefix caching,
and that further throughput improvements are
possible when modest reductions in generation
quality are acceptable.

\begin{table*}[t]
\centering
\small
\setlength{\tabcolsep}{6pt}
\renewcommand{\arraystretch}{1.15}
\begin{tabular}{l r cc cc cc}
\toprule
\multirow{2}{*}[-2pt]{Category}
& \multirow{2}{*}[-2pt]{\#Inst.}
& \multicolumn{2}{c}{Accuracy (\%) $\uparrow$}
& \multicolumn{2}{c}{Early Term. (\%) $\downarrow$}
& \multicolumn{2}{c}{TPS~(tok/s) $\uparrow$} \\
\cmidrule(lr){3-4}
\cmidrule(lr){5-6}
\cmidrule(l){7-8}
& & Hier. & Ours & Hier. & Ours & Hier. & Ours \\
\midrule
startend             & 67  & 77.6 & \textbf{83.6} & 60.0 & \textbf{9.1} & 32.42 & \textbf{42.11} \\
detectable\_content  & 53  & 45.3 & \textbf{67.9} & 34.5 & \textbf{0.0} & 26.83 & \textbf{37.20} \\
keywords             & 163 & 57.7 & 57.7 & 29.0 & \textbf{7.2} & 31.82 & \textbf{41.88} \\
combination          & 65  & 41.5 & \textbf{66.2} & 28.9 & \textbf{22.7} & 49.21 & \textbf{60.11} \\
detectable\_format   & 157 & 63.7 & \textbf{72.6} & 22.8 & \textbf{4.7} & 31.11 & \textbf{43.68} \\
length\_constraints  & 143 & \textbf{46.9} & 42.0 & 21.1 & \textbf{2.4} & 27.16 & \textbf{36.43} \\
language             & 31  & 77.4 & \textbf{80.6} & \textbf{14.3} & 16.7 & 36.30 & \textbf{44.33} \\
change\_case         & 89  & \textbf{61.8} & 57.3 & \textbf{5.9} & 7.9 & 31.66 & \textbf{43.59} \\
punctuation          & 66  & \textbf{45.5} & 37.9 & \textbf{2.8} & 4.9 & 39.97 & \textbf{50.99} \\
\midrule
Overall & 834
& 56.7 & \textbf{60.4}
& 23.0 & \textbf{6.4}
& 32.83 & \textbf{43.66} \\
\bottomrule
\end{tabular}
\caption{
Instruction-level analysis of IFEval across instruction categories.
We report instruction-following accuracy, the proportion of failures attributable to early termination, and decoding TPS~(tokens/second) for Hierarchy-dLLM and our method. We define early termination as a failed instruction whose response length falls below 50\% of the corresponding vanilla response length.
}
\label{bpd:app:tab:ifeval_detailed}
\end{table*}

\begin{table*}[!ht]
\centering
\small
\setlength{\tabcolsep}{6pt}
\renewcommand{\arraystretch}{1.2}

\begin{tabularx}{\textwidth}{@{}X X@{}}
\toprule
\multicolumn{2}{@{}l}{\textbf{Case 1: Keyword and Repetition Constraints}} \\
\midrule
\multicolumn{2}{@{}p{\textwidth}@{}}{
\textbf{Prompt:}
Write a funny haiku about moms, containing keywords
``mom'' and ``mother'' in your response.
First repeat the request word for word without change,
then give your answer (...)
} \\
\midrule
\textbf{Hierarchy (16 tokens, Fail)}
&
\textbf{Ours (42 tokens, Pass)} \\
\midrule
Write a funny haiku about moms,
containing keywords "mom" and "mother
&
Write a funny haiku about moms, containing keywords "mom" and "mother" in your response.<br><br>Mom's smile is loud,<br>Mother's laugh is loud,<br>Laughter all around.
\\
\midrule
\multicolumn{2}{@{}l}{\textbf{Case 2: Explicit Ending Constraint}} \\
\midrule
\multicolumn{2}{@{}p{\textwidth}@{}}{
\textbf{Prompt:}
How to write a good Chinese poem?
At the end of your response, please explicitly
add a note starting with ``P.S.''
} \\
\midrule
\textbf{Hierarchy (59 tokens, Fail)}
&
\textbf{Ours (254 tokens, Pass)} \\
\midrule
Writing a good Chinese poem requires a blend of creativity, structure ( and understanding of traditional Chinese poetry. Here are some tips:<br><br>1. Understand the Structure: Familiarize yourself with the different forms of Chinese poetry, such as the 
jue 
\begin{CJK}{UTF8}{gbsn}
(绝句), 
\end{CJK}
lishi
\begin{CJK}{UTF8}{gbsn}
(绝句
\end{CJK}
&
Writing a good Chinese poem requires a blend of creativity, structure, and cultural understanding. Here are some tips:<br><br>1. Choose Traditional Themes: … <br>…<br>6. Practice and Reflect: …<br><br>P.S. Remember to stay true to the essence of traditional Chinese poetry while adding your own unique voice.
\\
\bottomrule
\end{tabularx}
\caption{
Qualitative comparison between Hierarchy-dLLM
and our method on IFEval.
Our method satisfies the specified instructions,
whereas Hierarchy-dLLM produces incomplete responses.
Ellipses indicate omitted text.
}
\label{bpd:app:tab:ifeval_case_study}
\end{table*}

\subsection{Fine-grained Analysis on Instruction Following}

Aggressive parallel decoding increases efficiency by committing multiple tokens at each decoding step. 
However, instruction-following tasks expose a distinct failure mode in which accelerated decoding terminates generation before the response satisfies the requested structure.
We analyze this behavior by decomposing IFEval results by instruction category.
We define an early termination error as a failed instruction whose response contains fewer than half as many tokens as the vanilla response to the same prompt.
We compare our method with Hierarchy-dLLM, which adopts a similar divide-and-conquer decoding structure, to examine how different token selection strategies affect instruction-following performance.
Table~\ref{bpd:app:tab:ifeval_detailed} shows that early termination accounts for 23.0\% of the failures from Hierarchy-dLLM, compared with 6.4\% from our method.

Completion-dependent instructions show the largest difference between the two methods.
Instructions such as startend, detectable\_content, and combination require the model to preserve generation until a specific part of the response appears.
Early termination prevents the decoder from producing required suffixes, postscripts, or content following a repeated request.
Table~\ref{bpd:app:tab:ifeval_detailed} shows that 60.0\% of the startend failures from Hierarchy-dLLM correspond to early termination.
Table~\ref{bpd:app:tab:ifeval_case_study} further illustrates cases in which Hierarchy-dLLM stops before generating the required content, while our method completes the response and satisfies the instruction.
Our method generates 167.9 tokens on average, which remains close to the 173.7 tokens from vanilla decoding.
Hierarchy-dLLM generates 146.6 tokens on average.

Our method achieves higher throughput across all instruction categories, with an overall improvement from 32.83 to 43.66 tokens per second. 
Despite this acceleration, our method maintains an average response length close to vanilla decoding (167.9 vs. 173.7 tokens), whereas Hierarchy-dLLM produces shorter responses (146.6 tokens). 
This difference also appears in the preservation of correct responses.
Our method retains 252 of the 303 prompts that vanilla decoding answers correctly, compared with 227 for Hierarchy-dLLM. 
These results demonstrate that our method improves decoding efficiency while preserving response completeness and instruction-following performance.

\subsection{Case Study of Pivot Selection}
\label{bpd:app:ssec:case_study}

\begin{figure*}[!t]
    \centering
    \includegraphics[width=\linewidth]{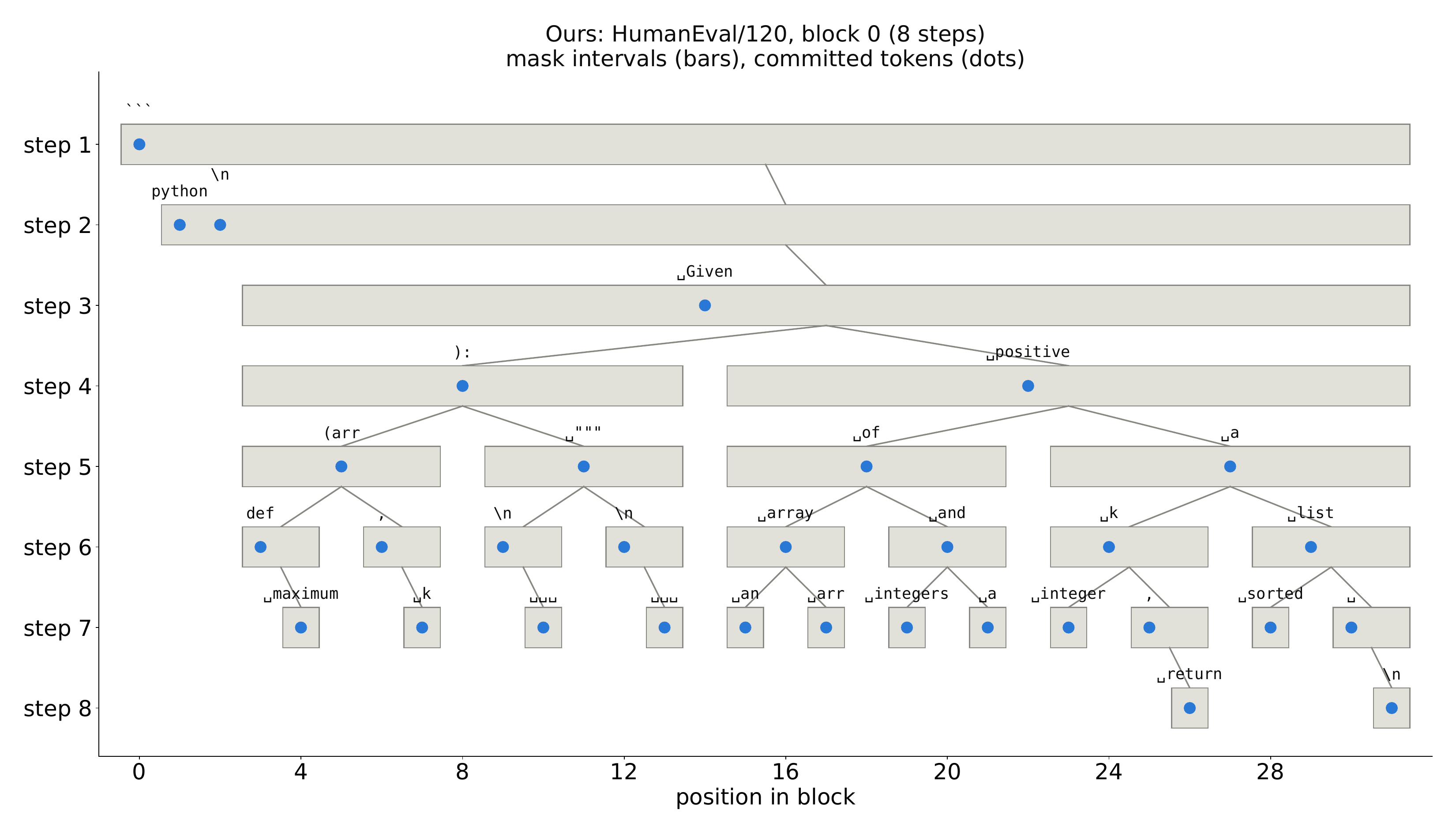}
    \caption{Decoding trajectory on HumanEval. Gray bars represent masked intervals, and 
    blue dots indicate committed tokens.
    Our method recursively partitions masked intervals through center-preferred pivot selection, completing a 32-token block in eight decoding steps.
    }
    \label{bpd:app:fig:decoding_tree}
\end{figure*}

Figure~\ref{bpd:app:fig:decoding_tree} illustrates
the token unmasking process of our method on a
HumanEval example using LLaDA-8B-Instruct.
The decoding process begins with confidence-based
token commitments that provide initial context
for subsequent predictions.
These early commitments create multiple masked
intervals of varying lengths across the block.
Our method then selects pivots near the center
of each remaining interval and recursively
partitions them into smaller subintervals.
Since each active interval is processed independently,
multiple pivots can be committed in parallel
during a single forward pass.
As decoding progresses, the remaining masked
intervals become shorter and more distributed
across the sequence.
The 32-token block is completed in only eight
decoding steps, compared with 32 steps required
by single-token unmasking.
This example illustrates how confidence-based
prefilling and interval-wise pivot selection
jointly exploit parallelism to reduce decoding depth.

\end{document}